\documentclass[sigconf]{aamas}

\usepackage{balance} 
\usepackage{times}
\usepackage{helvet}
\usepackage{courier}
\usepackage{graphicx}
\usepackage{algorithm}
\usepackage{algorithmic}
\usepackage{geometry}
\usepackage{hyperref}
\usepackage{comment}
\usepackage{todonotes}
\usepackage{booktabs}
\usepackage{cleveref}
\usepackage{float}
\usepackage{stfloats}
\usepackage{placeins}

\newtheorem{theorem}{Theorem}

\newtheorem{definition}{Definition}

\newcommand{\Uadd}{U^{\text{add}}}
\newcommand{\Uorth}{U^{\text{orth}}}
\newcommand{\uorth}{u^{\text{orth}}}
\newtheorem{prop}{Proposition}

\newcommand{\R}{\mathbb{R}}
\newcommand{\T}{\top}

\newcommand{\ipV}[2]{\langle #1,#2\rangle_{V_t}}
\newcommand{\eps}{\varepsilon}
\usepackage{natbib}

\setcopyright{none}
\renewcommand\footnotetextcopyrightpermission[1]{}

\acmSubmissionID{1725}

\title{Learning to Price: Interpretable Attribute-Level Models for Dynamic Markets}

\author{Srividhya Sethuraman}
\affiliation{
  \institution{IIT Madras, TCS Research }
  \city{Chennai}
  \country{India}}
\email{cs23s024@cse.iitm.ac.in, srividhya.sethuraman1@tcs.com}

\author{Chandrashekar Lakshminarayanan}
\affiliation{
  \institution{IIT Madras}
  \city{Chennai}
  \country{India}}
\email{chandrashekar@cse.iitm.ac.in}

\begin{abstract}
Dynamic pricing in high-dimensional markets poses fundamental challenges of scalability, uncertainty, and interpretability. Existing low-rank bandit formulations learn efficiently but rely on latent features that obscure how individual product attributes influence price. We address this by introducing an interpretable \emph{Additive Feature Decomposition-based Low-Dimensional Demand (\textbf{AFDLD}) model}, where product prices are expressed as the sum of attribute-level contributions and substitution effects are explicitly modeled. Building on this structure, we propose \textbf{ADEPT} (Additive DEcomposition for Pricing with cross-elasticity and Time-adaptive learning) — a projection-free, gradient-free online learning algorithm that operates directly in attribute space and achieves a sublinear regret of $\tilde{\mathcal{O}}(\sqrt{d}T^{3/4})$. Through controlled synthetic studies and real-world datasets, we show that ADEPT (i) learns near-optimal prices under dynamic market conditions, (ii) adapts rapidly to shocks and drifts, and (iii) yields transparent, attribute-level price explanations. The results demonstrate that interpretability and efficiency in autonomous pricing agents can be achieved jointly through structured, attribute-driven representations.
\end{abstract}

\keywords{Dynamic pricing, low-dimensional demand, additive feature decomposition, dynamic markets, interpretability, attribute level prices, LEARN}

\newcommand{\BibTeX}{\rm B\kern-.05em{\sc i\kern-.025em b}\kern-.08em\TeX}

\begin{document}


\pagestyle{plain}
\fancyhead{}


\maketitle 


\section{Introduction}
E-commerce platforms operate in complex, high-dimensional markets where product variety is vast and consumer behavior is dynamic. They rely on sophisticated tools for recommendation \cite{ricci2015recommender, koren2009matrix}, inventory-aware personalization \cite{jannach2016recommendation}, and demand forecasting \cite{taddy2019bayesian}. Among these tools, pricing remains a central and powerful lever — affecting not only revenue but also inventory movement, competition, and customer engagement. Dynamic pricing, the practice of adjusting prices over time in response to market conditions, enables platforms to align prices with consumer preferences, temporal trends, and competitive pressures \cite{chen2016empirical, elmaghraby2003dynamic}.

The dynamic pricing problem involves sequentially setting prices for products with the goal of maximizing cumulative revenue over time under uncertain and evolving demand. Dynamic pricing strategies can broadly be categorized into two paradigms. The first, demand-based pricing, adjusts product prices in response to aggregate demand signals, offering a coarse-grained control mechanism over revenue and inventory \cite{chen2016dynamic}. In contrast, personalized dynamic pricing leverages fine-grained user-level features — such as demographics, purchase history, and behavioral patterns — to set individualized prices aimed at maximizing user-specific revenue or engagement \cite{chen2019personalized,kawale2015price}. While demand-based methods are reactive and population-level, personalized approaches enable more nuanced, context-aware pricing decisions. 

In this paper, we focus on demand-based pricing. The demand for a product is typically decomposed into (i) a \textit{baseline} component—representing consumer preference over product attributes, and (ii) an \textit{elasticity} component that quantifies how demand changes in response to pricing \cite{berry1995automobile, hosken2008retail}. Self-price elasticity captures the effect of a product’s price on its own demand, while cross-price elasticity reflects the influence of other products’ prices, particularly for substitutes with similar features \cite{andrews2016estimating}. A key aspect in modeling elasticity is the substitution effect, wherein a similar product priced lower adversely affects the demand of a given product.

Dynamic pricing in real-world markets is challenging due to two key factors: (a) the \textit{high dimensionality} arising from large product catalogs and (b) the \textit{uncertainty} in baseline preferences and elasticity matrices.  An autonomous pricing agent that uses (a) a low-dimensional feature representation to tackle the high dimensionality of product catalogs and (b) an online learning paradigm to tackle the uncertainty in elasticity is a good candidate to solve the dynamic pricing problem. A very desirable quality expected from such an autonomous agent is interpretability: the decisions made by the agent need to offer business insights. 

In this paper, we study dynamic pricing in markets with closely related products—such as mobile phones, hotels, or airline tickets — where both demand and pricing are driven by product attributes. In this setting, we make two key contributions: (I) a novel interpretable low-dimensional demand model, and (II) a dynamic pricing algorithm tailored for this model under unknown and evolving market conditions.

\textbf{Contribution I:} Our notion of interpretability is the ability to capture substitution effects -- informally speaking, we want to answer the following question:
\begin{center}
\textbf{Why is product A priced higher than product B?}
\end{center}
We aim to develop an interpretable model that can answer:
\begin{center}
\emph{"Product A is priced higher because it offers additional features that product B lacks."}
\end{center}
To support such reasoning, we propose the Additive Feature Decomposition based Low-Dimensional (AFDLD) model, which expresses prices as additive contributions from observable attributes. By linking prices directly to measurable features (e.g., RAM, camera, display), AFDLD provides a transparent decomposition of each product’s price and captures substitution effects among similar items. This constitutes our first contribution: an interpretable framework that explains price differences and substitution patterns through attribute-level decomposition.

\textbf{Contribution II:} Next, under the AFDLD model, we address the question:
\begin{center}
\textbf{How should we price products in dynamic markets?}
\end{center}
Real-world markets are dynamic, with shifting demand and uncertain consumer preferences. To tackle this, we introduce ADEPT—an online learning algorithm for dynamic pricing under the AFDLD model.
ADEPT operates in attribute space and achieves sublinear regret of $\tilde{\mathcal{O}}(d^{1/2}T^{3/4})$, enabling effective pricing decisions in evolving environments.

\textbf{Novelty:} Mueller et al. \cite{mueller2019low}, introduce a low-rank demand and OPOK an online learning to solve the dynamic pricing problem. In comparison, our work is novel in two respects - (a) Our Additive Feature Decomposition (AFDLD) enables attribute level interpretability and (b) we explicitly incorporate substitution effects into the demand formulation. 
To the best of our knowledge, none of the existing literature on dynamic pricing explicitly models substitution effects across products. This omission is critical, as substitution drives cross-product interactions that significantly influence demand and revenue outcomes. Our work addresses this gap by incorporating substitution effects directly into the attribute-level pricing formulation, thereby enabling realistic and actionable pricing.

\noindent We next introduce Additive Feature Decomposition (AFD) to model interpretable, attribute-based prices.

\begin{figure}[!h]
    \centering
    \includegraphics[width=0.99\linewidth]{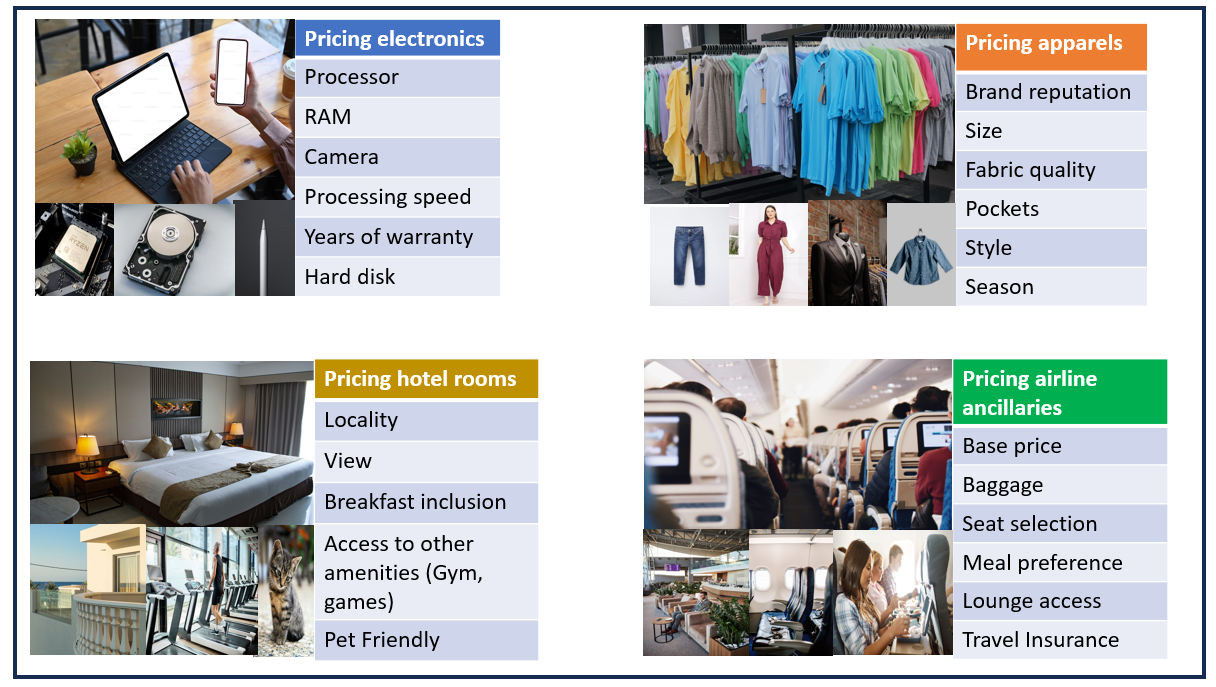}
   \caption{Illustration of additive feature-based price decomposition across domains.} 
    \label{fig:adddecompose}
\end{figure}

\section{Additive Feature Decomposition (AFD) of product prices} \label{sec:afd}
We begin with the definition of attribute-based features.

\begin{definition}[Product–Feature Matrix]
For a market with $N$ products, each having $d$ observable attributes, let $U \in \mathbb{R}^{N \times d}$ denote the product–feature matrix. The feature vector of the $i^{\text{th}}$ product is $U(i) = (U(i,1), \ldots,U(i,d))$, where for $j = 1, \ldots, d$, $U(i,j) \in \{0, \ldots, k\}$, i.e., each attribute takes a discrete value from $\{0, \ldots, k\}$. We also use $u(i)^\top \in \mathbb{R}^{d}$ to denote the $i^{\text{th}}$ row of $U \in \mathbb{R}^{N \times d}$.
\end{definition}

\paragraph{Additive feature decomposition (AFD)}
AFD prices a product as the \emph{sum of interpretable feature contributions}: each product/service attribute (size, brand, materials, options, policies) carries a transparent attribute level price. We now discuss how the Additive Feature Decomposition (AFD) occurs naturally in four different domains namely electronics,  apparels, hotel and airline markets (See Figure \ref{fig:adddecompose}). 

\emph{Examples:} 
\begin{enumerate}
    \item \textbf{Consumer electronics:} phone/tablet prices decompose into base hardware, storage tier, camera/display modules, and software/services premia (\Cref{fig:featrepr}). 
    \item \textbf{Apparel:} garments combine base fabric, workmanship, certification (e.g., organic), brand price, and seasonal exclusivity. 
    \item \textbf{Hotels:} a nightly rate splits into base room, location/view, board plan, amenities, and flexibility options. 
    \item \textbf{Airlines:} fares add base transport to taxes/fees, baggage, seat selection, lounge/service upgrades.
\end{enumerate}

\emph{Formal model.}
Let ${u(i)}^\top\in\mathbb{R}^d$ be the $i^{th}$ row of the feature matrix $U\in\mathbb{R}^{N\times d}$ denoting the features of the $i^{th}$ product. Let the attribute prices be denoted by $\theta\in\mathbb{R}^d$. Under AFD, the price $p(i)$ of a product i is given by,
\[
p(i) \;=\; \langle u(i),\theta\rangle \;=\; {u(i)}^\top \theta \;=\; \sum_{j=1}^d u(i,j)\theta(j),
\]
so each $u(i,j)\theta(j)$ is an explicit, controllable price addend. Here, each product price is directly expressed in terms of its attribute prices. This linear and interpretable structure underpins our algorithmic design: we learn (or adjust) $\theta$ online under box constraints, enabling scalable, real-time pricing.

\begin{figure}[!h]
    \centering
    \includegraphics[width=0.9\linewidth]{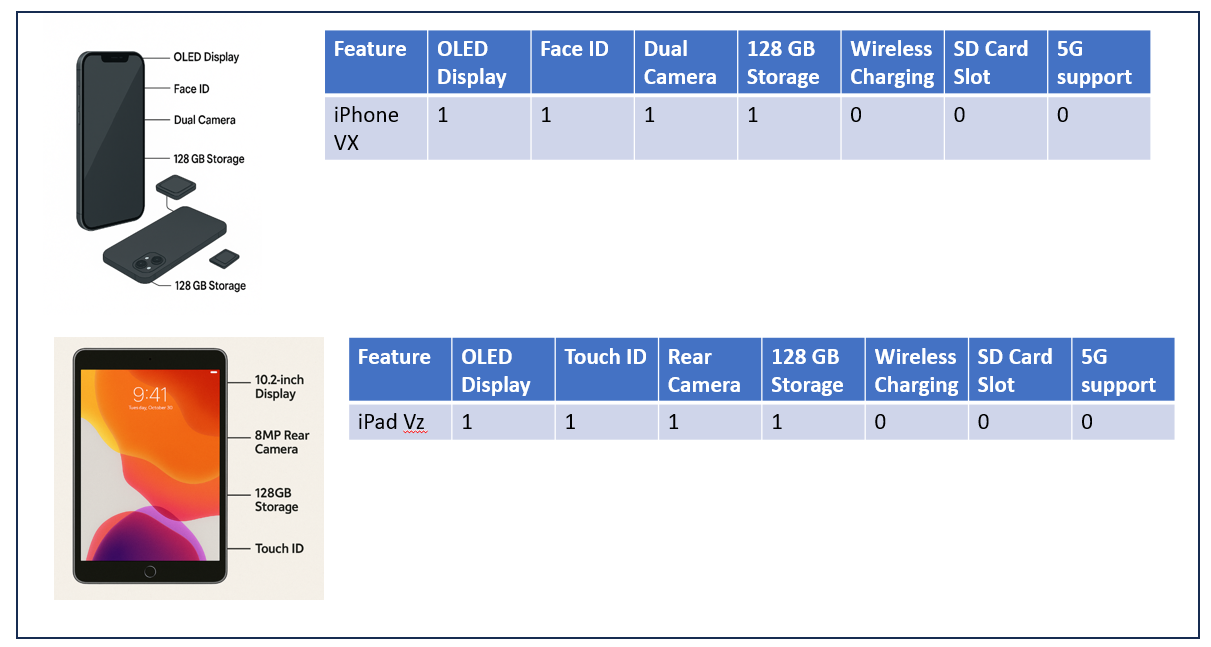}
   \caption{Illustration of AFD representation where interpretable attributes combine additively to form transparent, feature-level price components.}
     \label{fig:featrepr}
\end{figure}

\section{Related Work}
Most closely related work is that of \cite{mueller2019low}. We have briefly mentioned the novelty of our work over their work in the introduction. A detailed comparison will be made in \Cref{sec:attrelasmodel,sec:adept} where we propose our model and algorithm respectively. We now compare our work with other related work in the literature. 
Our work sits at the intersection of (i) dynamic pricing with contextual bandits, (ii) low-rank representations for generalization, and (iii) additive/attribute-based pricing. 

\textit{Dynamic pricing with contextual bandits.}
Contextual bandits model feature-dependent demand and have been applied to personalized and non-stationary pricing \citep{ban2021personalized,keskin2020dynamic,qi2021meta}. While these methods provide strong guarantees, they often struggle with scalability in high-dimensional feature spaces. We build on this line of work by introducing a structured, low-dimensional attribute representation that preserves sample efficiency. \newline
\textit{Low-rank and structured feature models.}
Low-rank structures improve generalization in contextual bandits and recommendations \citep{cohen2016contextual,sen2016contextual,zhu2021improving,wang2021low}. Unlike prior formulations that focus purely on compression, our model leverages a pricing-specific inductive bias: latent drivers correspond to interpretable product attributes. \newline
\textit{Additive and attribute-based pricing.}
Additive decomposition enables transparency and control in practical pricing systems \citep{chen2021interpretable,tang2022personalized,gupta2020pricing,aleksandrov2023modular}. Enterprise CPQ tools (e.g., Salesforce Communications Cloud) operationalize this through \emph{Attribute-Based Pricing} (ABP) rules \citep{SalesforceABPHelp,ZhangABPSSRN}. Our approach extends ABP into an \emph{online learning} framework, treating attribute adjustments as decision variables and learning from bandit feedback via projection-free updates. \newline

\section{AFDLD: A Novel Interpretable Low-Dimensional Demand Model} \label{sec:attrelasmodel}
In this section, we formulate the dynamic pricing problem as a regret minimization problem. We then introduce our low-dimensional demand model which is based on the additive features discussed in the previous section. We then show that under this novel demand model the regret minimization problem is a convex optimization problem. We then discuss how our novel model captures the substitution effects absent in the low-rank model \cite{mueller2019low}. We also illustrate the importance of capturing substitution effects via toy examples.

\textbf{Regret Formulation.} We study high-dimensional dynamic pricing where a seller offers $N$ products over $T$ time periods. At each round $t$, the seller chooses a price vector $p_t \in \mathbb{R}^N$, observes demand $q_t \in \mathbb{R}^N$, and earns revenue $R_t(p_t) = -\langle q_t, p_t \rangle$. The goal is to maximize cumulative revenue while keeping 
the \emph{regret} small:
\begin{align}\label{eq:regretmin}
\mathrm{Regret}(T) = \mathbb{E}\!\left[ \sum_{t=1}^T \big( R_t(p^*) - R_t(p_t) \big) \right],
\end{align}
where 
\[
p^* = \arg\min_{p \in \mathcal{P}} \sum_{t=1}^T- R_t(p), 
\]
is the best fixed price vector in hindsight and $\mathcal{P}$ is the set of allowable prices. Maximizing total revenue is equivalently minimizing total of negative revenue.

We now introduce our novel \textbf{Additive Feature Decomposition-based Low Dimensional (AFDLD) demand model }. As motivated in \Cref{sec:afd}, let $U \in \mathbb{R}^{N \times d}$ denote the product feature matrix defined in terms of the observable attributes, and $\theta_t \in [\theta_{\min}, \theta_{\max}]^d$ 
the attribute-level prices. Product-level prices are then given by
\[
p_t = U \theta_t
\]
hence additive decomposition ensures that each attribute contributes a distinct increment to the final price, providing transparency and interpretability.

Demand for product $i$ at time $t$ is modeled as
\begin{subequations}\label{eq:general-demand}
\begin{align}\label{eq:lddm}
q_t(i)
&= u(i)^\T z_t
   - \alpha_{ii}\,\ipV{u(i)}{u(i)}\,p_t(i) \notag\\
&\quad
   + \sum_{j=1}^N\alpha_{ij}\,\ipV{u(i)}{u(j)}\,(p_t(j) - p_t(i))
   + \eps_t(i).
\end{align}
\end{subequations}
where $z_t \in \mathbb{R}^d$ captures baseline demand in attribute space, $\alpha_{ii}$ encodes own-price sensitivity, 
and $\alpha_{ij}$ models substitution strength between product $i$ and $j$. 
The $(p_t(j) - p_t(i))$ term captures realistic substitution effects: if a close substitute $j$ becomes more expensive, then demand for $i$ rises and vice-versa. $V_t \in\mathbb{R}^{d \times d}$ is a positive definite matrix which captures the relative importance of attributes.

\paragraph{\textbf{Revenue as a Quadratic Form.}} Since the product price is related to the base price via $p_t=U\theta_t$, collecting terms yields a quadratic revenue function in $\theta_t$:  
\begin{equation}
R_t(\theta_t) \;=\; -\,\theta_t^\top {U}^\top M U \theta_t \;+\; \theta_t^\top {U}^\top z_t,
\end{equation}
where $M_t = M_1 + M_2 + M_3$ is the attribute-elasticity operator with
\begin{equation}
\begin{aligned}
M_1 &= 2\,\mathrm{Diag}({U}V_t{U^\top)}, \\
M_2 &= -\,UV_tU^\top, \\
M_3 &= \mathrm{Diag}\!\left(\sum_{j} \langle u(i), u(j) \rangle_{V_t} \right).
\end{aligned}
\end{equation}

\begin{prop}[Convex Optimization]
The objective $-R_t(\theta_t)$ is convex in $\theta_t$. 
\end{prop}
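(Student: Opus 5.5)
Since $-R_t(\theta_t)=\theta_t^\top U^\top M_t U\theta_t-\theta_t^\top U^\top z_t$ is a quadratic plus a linear term, convexity holds exactly when the symmetric part of $U^\top M_t U$ is positive semidefinite. Because $M_t$ is symmetric and $x^\top U^\top M_t U x=(Ux)^\top M_t(Ux)$, it suffices to show $M_t\succeq 0$. The Hessian is then $2U^\top M_t U\succeq 0$.

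Write $G=UV_tU^\top$ for the Gram matrix of the products in the $V_t$-inner product, so $G_{ij}=\ipV{u(i)}{u(j)}$. Since $V_t\succ 0$, we have $G\succeq 0$, and its diagonal entries $G_{ii}=\|u(i)\|_{V_t}^2$ are nonnegative. In this notation $M_t=2\,\mathrm{Diag}(G)-G+\mathrm{Diag}(G\mathbf{1})$.

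\textbf{Step 1.} Split $M_t=\mathrm{Diag}(G)+L$, where $L=\mathrm{Diag}(G)+\mathrm{Diag}(G\mathbf 1)-G$. The first summand is PSD because its diagonal is nonnegative.

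\textbf{Step 2.} Show $L\succeq 0$. The matrix $\mathrm{Diag}(G\mathbf 1)-G$ is the Laplacian of the weighted graph with weights $G_{ij}$, so
\[
x^\top\bigl(\mathrm{Diag}(G\mathbf 1)-G\bigr)x=\tfrac12\sum_{i,j}G_{ij}\,(x_i-x_j)^2 .
\]
This is nonnegative when all $G_{ij}\ge 0$. That holds, for instance, if $V_t$ has nonnegative entries, since $U$ has entries in $\{0,\dots,k\}$; this is the natural regime, e.g.\ diagonal attribute weights.

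For general PD $V_t$, where off-diagonal $G_{ij}$ can be negative, I would instead argue directly on $L$. Each row of $L$ has diagonal entry $2G_{ii}+\sum_{j\ne i}G_{ij}$. For $j\neq i$ the terms with $G_{ij}\ge0$ do not hurt diagonal dominance. For the negative terms I would try the Cauchy--Schwarz bound $|G_{ij}|\le\sqrt{G_{ii}G_{jj}}$ together with the PSD-ness of $G$. Concretely, I would write $L=2\,\mathrm{Diag}(G)-G+(\text{Laplacian part})$, where the Laplacian part is $\mathrm{Diag}(G\mathbf 1)-\mathrm{Diag}(G)$ minus the off-diagonal of $G$. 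I would handle $2\,\mathrm{Diag}(G)-G$ via the bound $G\preceq$ a diagonal matrix: this needs $\lambda_{\max}$ control, so I would then restrict to the nonnegative-weight setting.

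\textbf{Main obstacle.} The sign of the off-diagonal Gram entries is the crux. With nonnegative $V_t$ (or more generally $\ipV{u(i)}{u(j)}\ge0$, meaning substitutes have nonnegative similarity, which is the modelling intent), Steps 1--2 close the argument cleanly. In that regime $M_t=\mathrm{Diag}(G)+\mathrm{Diag}(G)+\text{Laplacian}\succeq0$, the Hessian $2U^\top M_tU\succeq 0$, and convexity follows. I would state that nonnegativity assumption explicitly if the general case does not go through.
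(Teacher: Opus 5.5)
Your argument is essentially the paper's: you split $M_t$ into a nonnegative diagonal part plus the graph Laplacian $\mathrm{Diag}(G\mathbf 1)-G$, which is the paper's $M_2+M_3=D-W$ (the paper also folds $\alpha_{ij}\ge 0$, symmetrized, into the weights). You then conclude $2U^\top M_tU\succeq 0$ by congruence, relying on exactly the nonnegative cross-similarity assumption $\ipV{u(i)}{u(j)}\ge 0$ that the supplement imposes as (A2). You can drop the exploratory paragraph on general positive definite $V_t$, because that case genuinely fails: take one product with features $(1,0)$, ten with $(0,1)$, and $V_t$ with unit diagonal and off-diagonals $-\tfrac12$; then $(U^\top M_tU)_{11}=M_t(1,1)=-3$, so the assumption is necessary, not merely convenient.
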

\textbf{Remarks:}

$\bullet$ For fixed $z_t=z,\forall t\geq 0 $ and $V_t=V, \forall t\geq 0 $ the market is static and $R_t(\theta)=R(\theta),\forall t\geq 0$. Since -$R(\theta)$ is convex, in the case of static market, we can compute the optimal attribute level price by minimizing as $\theta^*=\arg\min_{\theta}R(\theta)$. 

$\bullet$ In the case when the market is dynamic, minimizing the regret in \eqref{eq:regretmin} amounts to an \emph{online convex optimization problem}, which is achieved by our ADEPT algorithm (\Cref{sec:adept}).
\textit{(See Supplement for an extended discussion.)}\\

\textbf{Comparison with prior model.} In the low-rank demand model of \cite{mueller2019low} the demand for product $i$ is given by:
\begin{align}\label{eq:lrdm}
q_t(i) = \uorth_t(i)^\top z_t - \sum_{j=1}^N \langle \uorth(i),  \uorth(j) \rangle_{V_t} \, p_t(j) + \varepsilon_{t}(i)
\end{align}

We now list the differences between our model in \eqref{eq:lddm} and that of \cite{mueller2019low} in \eqref{eq:lrdm}, the key one being that \eqref{eq:lrdm} does not capture substitution effects.\\
$\bullet$ \textbf{Features:} In our model $U$ is based on observable product features, whereas in \eqref{eq:lrdm}, $\Uorth$ is an orthogonal latent feature matrix.

$\bullet$ \textbf{Baseline Demand:} In both models $z_t\in\R^d$ denotes the baseline demand. The difference is that in our model, $z_t$ captures the baseline demand for the observable product attributes, whereas in \eqref{eq:lrdm}, it stands for the baseline demand for the latent features.

$\bullet$ \textbf{Elasticity:} As discussed in the text below \eqref{eq:lddm}, our model captures the substitution effects. As can be seen from \eqref{eq:lrdm}, when two different products $i,j$ are similar, it is quite natural to expect that if $j$ is priced higher than $i$, customers will prefer $i$ more than $j$ which will lead to an increase in demand for $i$. However, in \eqref{eq:lrdm}, an increase in price of a similar product $j$ decreases the demand for product $i$ which is quite unnatural.

$\bullet$ \textbf{Low Dimension vs. Low Rank:} The final revenue expression in our setting is low-dimensional (see \eqref{eq:lddm}) and is not low-rank. In \eqref{eq:lrdm}, the model is low-rank, however, the price $p_t\in\R^N$ is not.

$\bullet$ \textbf{Pricing:} In this paper, we assume that the pricing is low-dimensional in that $p_t=U \theta_t$. In \eqref{eq:lrdm}, the pricing $p_t$ is still in $N$ dimensions which is addressed via a projection step in their algorithm (we discuss this in the next section).

\subsection{Interpretability of AFDLD Model}
In markets with similar products, substitution effects strongly influence both demand and pricing. 
Our notion of interpretability is therefore tied to these effects—understanding how prices and demand vary with attribute-level similarity among products. The AFDLD model captures this relationship explicitly. We now illustrate how it produces interpretable and transparent pricing behavior through simple synthetic examples that reveal its attribute-level reasoning.

\subsubsection*{Setting 1: Equal attribute importance in $\mathbf{V}$}
To illustrate interpretability under the AFDLD formulation, we consider a realistic electronics example involving three smartphone variants ($N{=}3$) and three observable attributes ($d{=}3$): storage tier ($A_1$), camera module ($A_2$), and display type ($A_3$). The product–attribute matrix $U \!\in\! \mathbb{R}^{3\times3}$ and baseline demand $z\!\in\!\mathbb{R}^3$ are explicitly specified, while the feature–interaction matrix $V{=}\mathbf{I}$ isolates the role of self-effects. The elasticity coefficients are fixed at $\alpha_{ii}=\alpha_{ij}=0.15$.

\paragraph{\textbf{Experiment 1: Non-overlapping attributes (Table \ref{tab:adept_exp_results1}).}} In this experiment, we consider a degenerate setting in which each product has only one attribute. One can think of product $P_1$ being storage of $128$ GB ($A_1$ alone), product $P_2$ being a dual camera ($A_2$ alone) and product $P_3$ being an OLED display ($A_3$ alone). In this case, for $i=1,2,3$ the features $U(i)$ of product $P(i)$ are given by:
\[
U(1) = [1 \; 0 \; 0], \quad
U(2) = [0 \; 1 \; 0], \quad
U(3) = [0 \; 0 \; 1].
\]

When baseline demand is identical, the learned $\theta^*$ values are identical across attributes, leading to equal product prices ($p{=}U\theta^*$). When baseline demand is non-identical, the estimated $\theta^*$ yields distinct prices in direct proportion to the attribute-specific demand.  Interpretively, the model reveals that the camera feature contributes most to willingness-to-pay, followed by storage and display—transparent from the magnitude of $\theta^*$.  Such one-to-one mapping between attribute value and price component enables clear managerial explanation: product 2 is costlier precisely because consumers value its camera feature more highly.

\begin{table}[H]
\centering
\caption{AFDLD model with $V = I$ and non-overlapping features.}
\label{tab:adept_exp_results1}
\renewcommand{\arraystretch}{1.3}
\begin{tabular}{@{}lccc@{}}
\toprule
\textbf{\begin{tabular}{@{}c@{}} Baseline \\ demand $z$ \end{tabular}} & \textbf{\begin{tabular}{@{}c@{}} $\theta^*$ \\ values\end{tabular}} & \textbf{\begin{tabular}{@{}c@{}} Final prices  \\ $p= U \theta^*$ \end{tabular}} \\ 
\midrule
 $[60, 60, 60]$ & 
\begin{tabular}[c]{@{}c@{}}[13.72,  13.70,  13.72]\end{tabular} & 
\begin{tabular}[c]{@{}c@{}}[13.72, 13.70,  13.72]\end{tabular} \\

$[100, 150, 250]$ & 
\begin{tabular}[c]{@{}c@{}}[22.96, 33.60, 56.96]\end{tabular} & 
\begin{tabular}[c]{@{}c@{}}[22.96, 33.60, 56.96]\end{tabular} \\
\bottomrule
\end{tabular}
\end{table}

\paragraph{\textbf{Experiment 2: Overlapping attributes (Table \ref{tab:adept_exp_results2}).}}
We now allow shared features across products,
\[
U(1) = [1 \; 1 \; 0], \quad
U(2) = [0 \; 1 \; 1], \quad
U(3) = [1 \; 0 \; 1].
\]
corresponding to P1 (Storage + Camera), P2 (Camera + Display), and P3 (Storage + Display). For identical baseline demand, we obtain nearly uniform $\theta^*$ values and thus similar final prices. Under heterogeneous demand, the optimal $\theta^*$ and resulting $p$ exhibit interpretable differentiation: the combination of high-value attributes (e.g., camera plus display) produces higher composite prices. Every price difference can be decomposed and explained through the additive structure $p_i=\langle U(i,\cdot),\theta^*\rangle$, exposing how much each feature contributes to each product’s final price.

\begin{table}[H]
\centering
\caption{AFDLD model with $V = I$ and overlapping features.}
\label{tab:adept_exp_results2}
\renewcommand{\arraystretch}{1.3}
\begin{tabular}{@{}lccc@{}}
\toprule
\textbf{\begin{tabular}{@{}c@{}} Baseline \\ demand $z$ \end{tabular}} & \textbf{\begin{tabular}{@{}c@{}} $\theta^*$ \\ values\end{tabular}} & \textbf{\begin{tabular}{@{}c@{}} Final prices  \\ $p= U \theta^*$ \end{tabular}} \\ 
\midrule
$[60, 60, 60]$ & 
\begin{tabular}[c]{@{}c@{}}[6.77, 6.84, 6.79]\end{tabular} & 
\begin{tabular}[c]{@{}c@{}}[13.61, 13.63, 13.56]\end{tabular} \\
 
$[100, 150, 250]$ & 
\begin{tabular}[c]{@{}c@{}}[14.69, 17.78, 24.30]\end{tabular} & 
\begin{tabular}[c]{@{}c@{}}[32.47, 42.08, 38.98]\end{tabular} \\ 
\bottomrule

\end{tabular}
\end{table}

\subsubsection*{Setting 2: Heterogeneous attribute importance in $\mathbf{V}$}
We now extend the smartphone example to study the effect of unequal attribute importance, captured by a diagonal interaction matrix $V = \mathrm{diag}([1.5,\,1.2,\,1.0])$. The attributes—storage tier (A$_1$), camera module (A$_2$), and display type (A$_3$)—now differ in relative market influence: camera upgrades are perceived as most critical, followed by storage and display. The elasticity coefficients are fixed at $\alpha_{ii}=\alpha_{ij}=0.15$, and the $U$ structures remain the same as in Setting 1 to isolate the role of $V$.  
\paragraph{\textbf{Experiment 1: Non-overlapping attributes (\Cref{tab:adept_diagV1}).}}
Here, we use the U matrix as in the previous setting with non-overlapping attributes. We observe that when the baseline demand is identical, the estimated $\theta^*$ produces proportional prices $p$. Here, even though demand is uniform, prices differ because the attributes vary in intrinsic importance encoded in $V$. This yields an interpretable ranking: OLED display (A$_3$) has the highest contribution, followed by camera and storage.  When the baseline demand is non-identical, the interaction of demand and attribute significance amplifies price separation. Retailers can directly interpret these coefficients as attribute-level prices: a higher baseline demand for the "'display-intensive'" product magnifies the already high display value.

\begin{table}[H]
\centering
\caption{AFDLD model with $V = \mathrm{diag}([1.5,\,1.2,\,1.0])$ with non-overlapping features.}
\label{tab:adept_diagV1}
\renewcommand{\arraystretch}{1.3}
\begin{tabular}{@{}lccc@{}}
\toprule
\textbf{\begin{tabular}{@{}c@{}} Baseline \\ demand $z$ \end{tabular}} & \textbf{\begin{tabular}{@{}c@{}} $\theta^*$ \\ values\end{tabular}} & \textbf{\begin{tabular}{@{}c@{}} Final prices  \\ $p= U \theta^*$ \end{tabular}} \\ 
\midrule
$[60, 60, 60]$ & 
\begin{tabular}[c]{@{}c@{}}[8.94, 11.40, 13.50]\end{tabular} & 
\begin{tabular}[c]{@{}c@{}}[8.94, 11.40, 13.50]\end{tabular} \\

 $[100, 150, 250]$ & 
\begin{tabular}[c]{@{}c@{}}[15.20, 27.92, 56.78]\end{tabular} & 
\begin{tabular}[c]{@{}c@{}}[15.19, 27.93, 56.78]\end{tabular} \\

\bottomrule
\end{tabular}
\end{table}

\paragraph{\textbf{Experiment 2: Overlapping attributes (\Cref{tab:adept_diagV2}).}}
To capture more realistic smartphones sharing multiple features, we use U matrix with overlapping attributes as in the previous setting corresponding to P1 (Storage + Camera), P2 (Camera + Display), and P3 (Storage + Display). When baseline demand is identical, we obtain  $\theta^*$ and prices $p$. Even under identical demand, attribute heterogeneity causes asymmetric prices, highlighting how AFDLD decomposes product value into interpretable feature-wise premiums. Under non-identical demand, the estimated $\theta^*$ and $p$ show that products combining more valued attributes (camera + display) consistently achieve higher prices. 

\begin{table}[H]
\centering
\caption{AFDLD model with $V = \mathrm{diag}([1.5,\,1.2,\,1.0])$ with overlapping features.}
\label{tab:adept_diagV2}
\renewcommand{\arraystretch}{1.3}
\begin{tabular}{@{}lccc@{}}
\toprule
\textbf{\begin{tabular}{@{}c@{}} Baseline \\ demand $z$ \end{tabular}} & \textbf{\begin{tabular}{@{}c@{}} $\theta^*$ \\ values\end{tabular}} & \textbf{\begin{tabular}{@{}c@{}} Final prices  \\ $p= U \theta^*$ \end{tabular}} \\ 
\midrule

$[60, 60, 60]$ & 
\begin{tabular}[c]{@{}c@{}}[4.78, 5.71, 6.11]\end{tabular} & 
\begin{tabular}[c]{@{}c@{}}[16.98, 32.81, 15.85]\end{tabular} \\

$[100, 150, 250]$ & 
\begin{tabular}[c]{@{}c@{}}[9.79, 15.40, 21.40]\end{tabular} & 
\begin{tabular}[c]{@{}c@{}}[25.19, 36.80, 31.19]\end{tabular} \\ 
\bottomrule
\end{tabular}
\end{table}

\textbf{Interpretability: }
Across both settings, the AFDLD model provides a clear, human–readable decomposition of prices into attribute–level components. The learned vector $\theta^*$ quantifies the monetary contribution of each observable feature (e.g., storage, camera, display), while the feature matrix $U$ specifies how these features combine to form each product. \newline
When $V=I$ (Setting 1), all attributes contribute equally and price differences arise solely from variations in baseline demand $z$, directly revealing which features are most valued by consumers. 
When $V$ is heterogeneous (Setting 2), diagonal weights encode the relative market importance of each attribute, allowing ADEPT to separate the effects of baseline demand from intrinsic feature significance. 
Thus, the decomposition $p_i=\langle U(i,\cdot),\theta^*\rangle$ offers an interpretable pricing logic—every price differential can be traced back to specific attributes and their learned valuations—unlike low–rank latent models where such attribution is opaque.

\section{ADEPT: A Novel Dynamic Pricing Algorithm} \label{sec:adept}
In this section, we present the ADEPT algorithm, which performs dynamic pricing directly in the attribute space. 
We describe how ADEPT initializes attribute-level parameters, perturbs them through one-point bandit feedback, and updates prices via simple clipping within feasible bounds. We highlight key differences between ADEPT and OPOK in initialization, gradient estimation, projection, and interpretability. Finally, we establish the theoretical regret bound for ADEPT, showing that it achieves a sublinear expected regret of $\tilde{O}(\sqrt{d}\,T^{3/4})$ under standard smoothness and noise assumptions, thereby confirming both efficiency and scalability.

\begin{algorithm}
\caption{\textsc{ADEPT}}
\label{alg:adept-compact}
\begin{algorithmic}[1]
\REQUIRE Feature matrix $U \in \mathbb{R}^{N \times d}$, baseline $\theta_{\text{base}}$, box $[\theta_{\min}, \theta_{\max}]$, step size $\eta > 0$, perturbation $\epsilon > 0$, horizon $T$
\STATE \textbf{Initialize:} $\theta_1 \leftarrow 0$; set lower and upper limits $\ell \leftarrow \theta_{\min} - \theta_{\text{base}}$, $u \leftarrow \theta_{\max} - \theta_{\text{base}}$
\FOR{$t = 1$ to $T$}
    \STATE Sample a random direction $\xi_t \sim \mathrm{Unif}(\mathbb{S}^{d-1})$
    \STATE Perturb parameters: $\tilde{\theta}_t \leftarrow \theta_t + \epsilon \xi_t$
    \STATE Construct prices: $p_t \leftarrow U(\theta_{\text{base}} + \tilde{\theta}_t)$
    \STATE Observe revenue $y_t = R_t(p_t)$
    \STATE Compute one-point gradient estimate:
    \[
        g_t \leftarrow -\frac{d\,y_t}{\epsilon}\,\xi_t
    \]
    \STATE Update and clip within box:
    \[
        \theta_{t+1} \leftarrow \text{clip}_{[\ell, u]}\big(\theta_t - \eta g_t\big)
    \]
\ENDFOR
\STATE \textbf{Output:} Final prices $p = U(\theta_{\text{base}} + \theta_{T+1})$
\end{algorithmic}
\end{algorithm}

\begin{algorithm}
\caption{OPOK (Online Pricing Optimization with Known Features)}
\small
\begin{algorithmic}[1]
\REQUIRE Step sizes $\eta,\delta,\alpha > 0$, feature matrix $U \in \mathbb{R}^{N \times d}$, initial price $p_0 \in S$
\ENSURE Prices $p_1,\ldots,p_T$ to maximize revenue
\STATE Set prices $p_0 \in S$, observe $q_0(p_0), R_0(p_0)$
\STATE Define $x_1 \gets U^\top p_0$
\FOR{$t=1,\ldots,T$}
  \STATE Draw $\xi_t \sim \mathrm{Unif}\{x \in \mathbb{R}^d : \|x\|_2=1\}$
  \STATE $\tilde{x}_t \gets x_t + \delta \xi_t$
  \STATE Set prices: $p_t \gets \textsc{FindPrice}(\tilde{x}_t, U, S, p_{t-1})$, observe $q_t(p_t), R_t(p_t)$
  \STATE $x_{t+1} \gets \textsc{Projection}(x_t - \eta R_t(p_t)\xi_t, \alpha, U, S)$
\ENDFOR
\end{algorithmic}
\end{algorithm}

\subsection{OPOK Vs. ADEPT}
\begin{itemize}
  \item \textbf{Initialization:}  
  OPOK initializes with a feasible price vector $p_0 \in S$ and maps it into latent space $x_1 = U^\top p_0$.  
  ADEPT initializes directly in attribute space ($\hat{\theta}_1=0$) with box constraints $[\theta_{\min},\theta_{\max}]$ around a baseline $\theta_{\text{base}}$.

  \item \textbf{Perturbation direction:}  
  OPOK samples $\xi_t \in \mathbb{R}^d$ and perturbs the latent representation $x_t$.  
  ADEPT samples $\xi_t \in \mathbb{R}^d$ and perturbs the attribute-level parameters $\theta_t$.

  \item \textbf{Price construction:}  
  OPOK uses $\textsc{FindPrice}(\tilde{x}_t,U,S,p_{t-1})$ to map the perturbed latent vector $\tilde{x}_t$ to a feasible price vector $p_t$, ensuring stability relative to past prices.  
  ADEPT directly constructs candidate prices as $p_t=U(\theta_{\text{base}}+\tilde{\theta}_t)$, bypassing a separate feasibility solver.

  \item \textbf{Revenue observation:}  
  Both algorithms post prices and observe demand/revenue. 

  \item \textbf{Gradient estimation:}  
  OPOK updates $x_{t+1}$ via estimator: $x_{t+1} = \textsc{Projection}(x_t - \eta R_t(p_t)\xi_t,\alpha,U,S)$.  
  ADEPT updates via estimator: $g_t = -\frac{d\,y_t}{\varepsilon}\,\xi_t$.

  \item \textbf{Projection/Constraints:}  
  OPOK projects the latent point back into the feasible span using $\textsc{Projection}(\cdot)$.  
  ADEPT clips attribute prices into $[\theta_{\min},\theta_{\max}]$, enforcing economically meaningful nonnegativity and boundedness.

  \item \textbf{Output:}  
  OPOK’s iterates $p_t$ are feasible prices at each step.  
  ADEPT outputs the final feasible price vector $p=U(\theta_{\text{base}}+\hat{\theta}_{T+1})$.
\end{itemize}

\paragraph{Summary.}
OPOK and ADEPT differ in optimization domain and objective formulation — latent vs. attribute space—rendering their regrets and learned parameters not directly comparable.

\begin{theorem}\label{th:main}[Regret of \textsc{ADEPT}]
Let $U\in\mathbb{R}^{N\times d}$ with $\|U\|_{op}\!\le\!B_U$, and let the attribute box be
$\mathcal{C}\subset\mathbb{R}^d$ with radius $r_\Theta:=\max_{\theta\in\mathcal{C}}\|\theta\|_2$.
Assume for all $t$: $A_t=U^\top V_t U\succeq0$ with $\|V_t\|_{op}\!\le\!B_V$, $\|z_t\|_2\!\le\!B_z$,
and bandit noise that is $\sigma^2$–sub-Gaussian. With $\eta\!\asymp\!T^{-1/2}$ and
$\delta\!\asymp\!T^{-1/4}$,
\[
\mathbb{E}\!\Big[\textstyle\sum_{t=1}^T\big(R_t(\theta^\star)-R_t(\hat\theta_t)\big)\Big]
\;\le\; K\,\sqrt{d}\,T^{3/4},
\]
where $\theta^\star\in\arg\max_{\theta\in\mathcal{C}}\sum_{t=1}^T R_t(\theta)$ and
\[
K \;:=\; c_0\!\left(B_U B_z r_\Theta \;+\; B_U^2 B_V r_\Theta^2 \;+\; \sigma\right),
\]
for a universal constant $c_0>0$ (log factors absorbed into $c_0$). \\ See Supplement for the complete proof.
\end{theorem}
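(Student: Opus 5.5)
The plan is to reduce the problem to bandit convex optimization with one-point feedback and follow the Flaxman--Kalai--McMahan (FKM) analysis. Write the per-round loss as $f_t(\theta)=-R_t(\theta)=\theta^\top A_t'\theta-\theta^\top U^\top z_t$, with $A_t'=U^\top M_tU$. By the Convex Optimization proposition, each $f_t$ is convex. On the box $\mathcal{C}$ enlarged by $\delta$, I will check boundedness and Lipschitzness:
\[
|f_t|\le B_U^2B_Vr_\Theta^2+B_UB_zr_\Theta=:F,\qquad \|\nabla f_t\|\le 2B_U^2B_Vr_\Theta+B_UB_z=:L.
\]
Here I absorb the constant factors coming from $M_1,M_2,M_3$, which are controlled through $\|V_t\|_{op}\le B_V$ and $\|U\|_{op}\le B_U$, into $c_0$. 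These bounds are exactly why $K$ has the stated form.

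Next, introduce the smoothed losses $\hat f_t(\theta)=\mathbb{E}_{v\sim\mathrm{Unif}(\mathbb{B}^d)}f_t(\theta+\delta v)$. By Stokes' theorem,
\[
\mathbb{E}_{\xi_t}\!\left[\tfrac{d}{\delta}f_t(\theta_t+\delta\xi_t)\xi_t\mid\theta_t\right]=\nabla\hat f_t(\theta_t),
\]
so the estimator $g_t$, whose sign convention matches minimizing $f_t$, is conditionally unbiased for $\nabla\hat f_t$ up to the noise term. The noise term has conditional mean zero because it is independent of $\xi_t$. Two consequences follow:

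- The second moment satisfies $\mathbb{E}\|g_t\|^2\le d^2(F+\sigma)^2/\delta^2$. With sub-Gaussian noise, this costs at most a log factor if a high-probability truncation is needed.
- Since $\hat f_t$ is convex, the standard projected online gradient descent analysis applies. Clipping to a box is exactly Euclidean projection onto that box, hence nonexpansive.

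This gives
\[
\mathbb{E}\sum_t\big(\hat f_t(\theta_t)-\hat f_t(\theta)\big)\le \frac{r_\Theta^2}{\eta}+\frac{\eta}{2}\sum_t\mathbb{E}\|g_t\|^2 .
\]

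Then translate back to the true losses. Three costs appear:

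- $|\hat f_t-f_t|\le L\delta$.
- Playing $\theta_t+\delta\xi_t$ instead of $\theta_t$ costs another $L\delta$ per round.
- Comparing against $\theta^\star$ possibly on the boundary requires shrinking the box to $(1-\delta/r)\mathcal{C}$ so that perturbed points stay feasible. Alternatively, because the model is defined on all of $\mathbb{R}^d$, I can simply allow the perturbed point to leave the box slightly, with $f_t$ still bounded there. Either way this costs $O(L\delta T)$.

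Summing,
\[
\mathrm{Regret}\lesssim \frac{r_\Theta^2}{\eta}+\eta T\frac{d^2(F+\sigma)^2}{\delta^2}+L\delta T .
\]

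Balancing the three terms is where I expect the main obstacle. With the stated choices $\eta\asymp T^{-1/2}$ and $\delta\asymp T^{-1/4}$, the middle term is $d^2T$, which is linear in $T$. So the scalings must hide constants: take $\eta=c\,r_\Theta/(d(F+\sigma))\cdot T^{-3/4}$ and $\delta\asymp T^{-1/4}$. The FKM optimum then gives $O(\sqrt{d\,r_\Theta(F+\sigma)L}\,T^{3/4})$, or $dT^{3/4}$ with naive balancing.

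To obtain $\sqrt d$ rather than $d$, I would exploit the quadratic structure. Because $f_t$ is smooth with Hessian bounded by $B_U^2B_Vr$-type constants, the smoothing bias improves to $O(\beta\delta^2)$. The gradient variance can also be reduced by noting that $\mathbb{E}[f_t(\theta_t)\xi_t]=0$, so effectively only the centered value $f_t(\theta_t+\delta\xi_t)-f_t(\theta_t)=O(L\delta)$ contributes. This bounds $\mathbb{E}\|g_t\|^2\lesssim d(L^2+\sigma^2/\delta^2)$, up to the noise term, which I would handle carefully. Balancing then yields $\sqrt d\,T^{3/4}$ with $K=c_0(B_UB_zr_\Theta+B_U^2B_Vr_\Theta^2+\sigma)$. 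This variance-reduction step for the noisy one-point estimator is the delicate part; if it fails, I would settle for the $d$ dependence and record that the $\sqrt d$ claim needs a two-point or centered variant.
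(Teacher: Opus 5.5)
Your core argument analyzes a different estimator from the paper's, and yours is the one that matches Algorithm~1. The supplement restates the theorem for a two-point estimator $\frac{d}{2\varepsilon}(c_t(\theta_t+\varepsilon u_t)-c_t(\theta_t-\varepsilon u_t))u_t$. It sketches the same three-term split you use: a distance term $O(H^2/\eta)$, a variance term, and a smoothing bias $O(LT\varepsilon)$, with the tuning deferred to Flaxman et al. ADEPT, however, posts one price vector and observes one revenue per period while $(z_t,V_t)$ change, so it never sees the same $c_t$ at two points.

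Your one-point FKM analysis is sound: the Stokes identity, clipping as Euclidean projection onto the box, and no shrinkage because the quadratic loss is defined on all of $\mathbb{R}^d$. The noise enters only through $\mathbb{E}\|g_t\|^2$, so no truncation or log factor is needed. You are also right that, because $g_t$ carries the factor $d/\delta$, the stated $\eta\asymp T^{-1/2}$ and $\delta\asymp T^{-1/4}$ make the variance term linear in $T$. One needs $\eta\asymp T^{-3/4}$; the $T^{-1/2}$ is really FKM's unnormalized step $\eta d/\delta$. The paper's sketch has the same defect and does not address it: its variance term $\eta T d L^2/\varepsilon^2$ is $\Theta(T)$ at its own choices.

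Your last paragraph, however, is both unnecessary and wrong.

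\emph{Unnecessary.} Optimize the bound you already wrote, $\frac{r_\Theta^2}{\eta}+\eta T\frac{d^2(F+\sigma)^2}{\delta^2}+L\delta T$, jointly in $\eta$ and $\delta$. This gives $\delta\asymp\sqrt{d\,r_\Theta(F+\sigma)/L}\;T^{-1/4}$ and regret of order $\sqrt{d\,r_\Theta(F+\sigma)L}\;T^{3/4}$, which you also wrote down. Since $r_\Theta L\le 2F$, this is at most a constant times $\sqrt{d}\,(F+\sigma)\,T^{3/4}$, i.e.\ $K\sqrt{d}\,T^{3/4}$. The factor $d$ appears only if the constant hidden in $\delta\asymp T^{-1/4}$ must be $d$-free, yet you already let the constant in $\eta$ depend on $d$. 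For $T$ below order $d^2$, where this $\delta$ would exceed $r_\Theta$, the trivial bound $O(FT)$ is already below $K\sqrt{d}\,T^{3/4}$.

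\emph{Wrong.} The identity $\mathbb{E}[f_t(\theta_t)\xi_t]=0$ concerns only the mean of $g_t$. The second moment of the estimator ADEPT actually computes is $\frac{d^2}{\delta^2}\mathbb{E}[y_t^2]$, of order $d^2(F^2+\sigma^2)/\delta^2$. The uncentered part cannot be removed without a second query of $f_t$ in the same round, as in the paper's two-point estimator. Moreover, even for a centered estimator, the noise part $\frac{d}{\delta}\varepsilon_t\xi_t$ has squared norm $d^2\varepsilon_t^2/\delta^2$ because $\|\xi_t\|=1$. It therefore contributes $d^2\sigma^2/\delta^2$, not $d\sigma^2/\delta^2$. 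Replace that paragraph by the joint tuning.

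Finally, absorbing the $M_1,M_2,M_3$ factors into a universal $c_0$ is not justified for the model as written:
\begin{itemize}
\item the curvature $2U^\top M_tU$ can be of order $B_U^4B_V$;
\item the row sums $\sum_j\langle u(i),u(j)\rangle_{V_t}$ in $M_3$ can reach order $\sqrt{N}\,B_U^2B_V$;
\item the linear term is really $\theta^\top U^\top U z_t$, bounded by $B_U^2B_zr_\Theta$.
\end{itemize}
The stated $K$ implicitly takes the curvature to be $2A_t$, as the paper's sketch does when it writes $\nabla c_t(\theta)=2U^\top V_tU\theta-U^\top Uz_t$. This is a defect of the statement that you inherit, not one your argument can repair.
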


\section{Experiments and Insights}
\subsection{Empirical Validation of Additive Feature Decomposition in Real-World Datasets}
We validate the additive feature decomposition (AFD) assumption, which is central to ADEPT's formulation, using two large-scale, real-world retail datasets namely \href{https://www.kaggle.com/datasets/frtgnn/dunnhumby-the-complete-journey} {Dunnhumby Complete Journey} and \href{https://www.kaggle.com/competitions/h-and-m-personalized-fashion-recommendations/data}{H\&M Personalized Fashion Recommendation}. Both datasets are of the form $(x_i,y_i)_{i=1}^n$, where $x_i\in\R^d$ is the attribute based product feature (as discussed in \Cref{sec:afd}) and $y_i$ is the price. We hasten to mention that, our goal in this section is to verify that the attributes based features are indeed useful as features, and not to validate our demand model. In order to achieve our goal, we show that the attribute based features have good power in predicting the prices via linear regression fit.

\textbf{Dunnhumby Complete Journey (\Cref{fig:dunnhumby-afd}.):} In this dataset, the $d$-attributes are \texttt{\nolinkurl{COMMODITY\_DESC}},\texttt{\nolinkurl{SUB\_COMMODITY\_DESC}},\texttt{\nolinkurl{MANUFACTURER}} which take values in one among $94$ distinct commodities, $736$ unique sub-commodities and $1527$ unique manufacturers respectively. Therefore $x_i$ is a 2357-dimensional vector. It consisted of 39021 distinct products. The linear regression fit achieves a strong $R^2$ (0.6-0.8) and the learned coefficients exhibit intuitive semantic structure — e.g., premium brands and specialized commodity categories contribute positively to price, while generic or high-volume subcategories contribute negatively. Importantly, this decomposition generalizes well to unseen data, indicating that unit prices in grocery retail indeed follow an additive structure over product metadata. This is further supported by interpretable SHAP analyses and bar plots of regression coefficients, which  confirm the additive contribution of each feature group to the final price.

\textbf{H\&M Personalized Fashion Recommendation (\Cref{fig:hm-afd}):} In  this dataset, the $d$ attributes are \texttt{\nolinkurl{product\_type\_name}}, \texttt{\nolinkurl{garment\_group\_name}}, \texttt{\nolinkurl{index\_name}}, and \texttt{\nolinkurl{section}} which take values in one among $131$ distinct types, $21$ groups, $10$ unique indices and $52$ sections respectively. Therefore $x_i$ is 214-dimensional vector. It consisted of 104547 distinct products. The linear regression fit reveals statistically significant coefficients across many features.

\begin{figure}[t]
    \centering
    \includegraphics[width=0.50\textwidth]{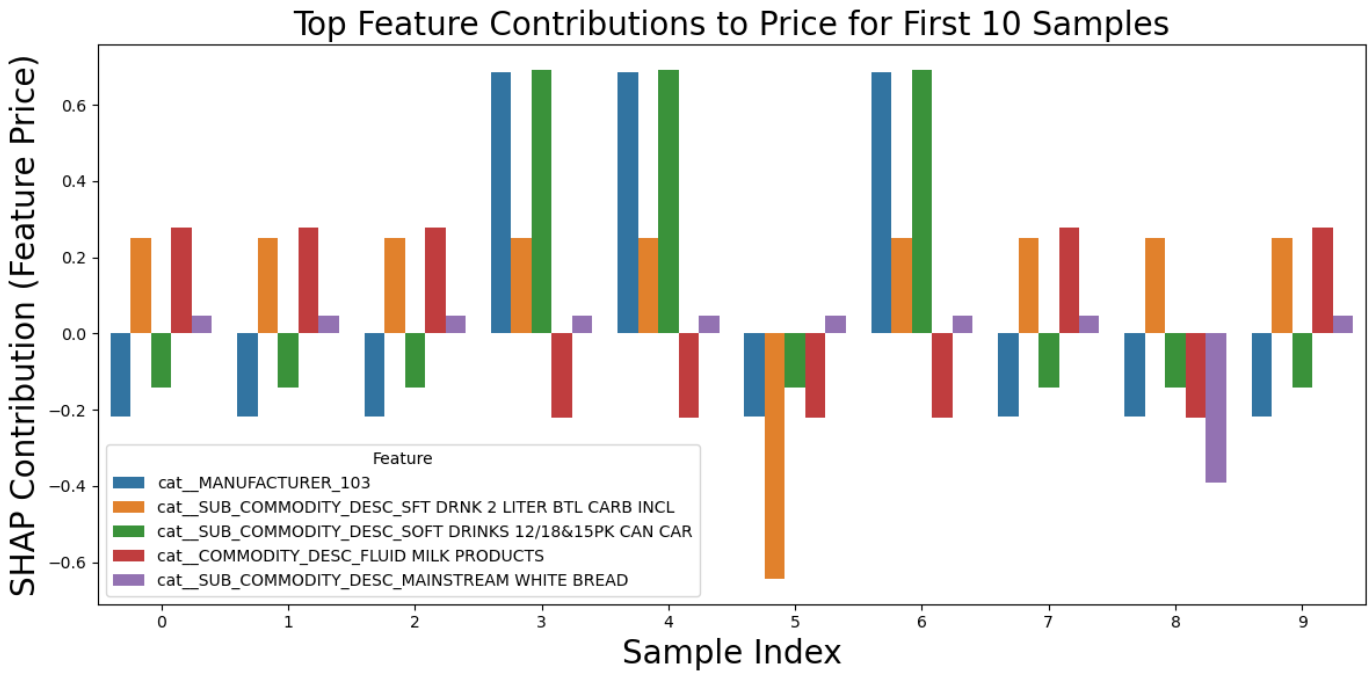}
    \caption{Additive feature contributions to unit prices in the Dunnhumby grocery dataset, obtained via linear regression coefficients.}
    \label{fig:dunnhumby-afd}
\end{figure}

\begin{figure}[t]
    \centering
    \includegraphics[width=0.45\textwidth]{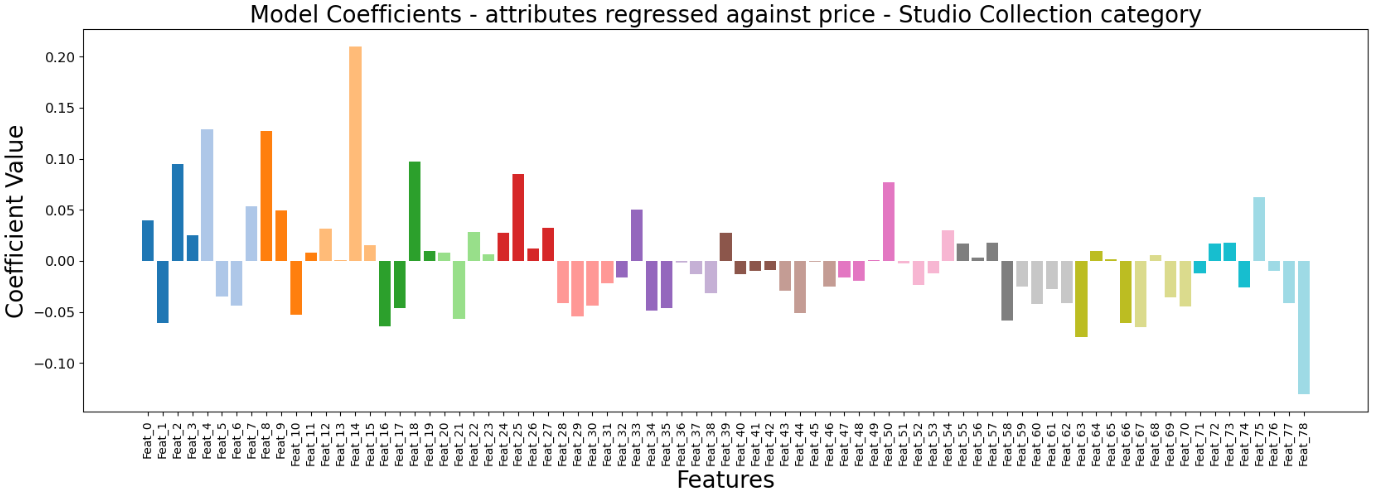}
    \caption{Additive feature contributions to prices in the H\&M fashion dataset, highlighting interpretable price drivers such as garment type and season.}
    \label{fig:hm-afd}
\end{figure}

\subsection{Comparing ADEPT with Baselines}
Having verified additive structure in real data, we next benchmark ADEPT against existing bandit algorithms.
In this section, we evaluate algorithms under the AFDLD model introduced in Section~\ref{sec:attrelasmodel}. The pseudocode for ADEPT is specified in \Cref{alg:adept-compact}. We compare ADEPT with two benchmark algorithms namely GDG \cite{flaxman2005bandit} and Explore-Exploit \cite{henaff2019explicit}. Specifically, we compare these algorithms under different market regimes.

\paragraph{Common Across Settings.}
All algorithms are tested under the following shared configuration:
\begin{itemize}
    \item Product-Feature matrix $U \in \mathbb{R}^{N \times d}$ with $N=60$ products and $d=6$ attributes, respecting additive decomposition. We divide the products into $B = 6$ disjoint blocks $\{G_b\}_{b=1}^{6}$, each containing $|G_b| = 10$ products. Each block $G_b$ is assigned a small set of active attributes $S_b \subseteq \{1,\dots,6\}$, ensuring that neighboring blocks share one or more attributes to introduce controlled overlap ($S_1 = \{1,2,3\}, S_2 = \{2,3,4\}, \dots, S_6 = \{6,1,2\}$).  For every product $i \in G_b$ and attribute $j \in S_b$, we sample $U(i,j) \sim \mathrm{Bernoulli}(p_b)$ with $p_b = 0.5$, and set $U(i,j) = 0$ for $j \notin S_b$. We then cap the number of active attributes per product to a small range (one or two active attributes per row). If a row has no active attributes, we activate one at random within $S_b$; if it exceeds the cap, we randomly retain only two active attributes. 
    \item Time horizon $T=50{,}000$ rounds.
    \item Gaussian observation noise with variance $\sigma^2 = 0.5$.
    \item Evaluation metric: cumulative regret relative to the best fixed price in $\mathcal{B}_P(R)$. We fix $\mathcal{B}_P(R)$ = $\{\,p \in \mathbb{R}^{N} : \|p - p_{0}\|_{2} \le 5\,\}$ for both GDG and Explore–Exploit (EE) algorithms. The baseline price vector $p_{0}$ and the attribute-level base parameter $\theta_{0}$ are related through $p_{0} = U\theta_{0}$. We set $\theta_{\min} = \theta_{0} - r$ and $\theta_{\max} = \theta_{0} + r$, ensuring that the attribute parameters explore the same range as the price space defined by $B_{P}(R)$ ($r=5$). For ADEPT, cumulative regret is measured relative to the best attribute level prices ($\theta^*$).
\end{itemize}

\paragraph{Algorithm-Specific Parameters.}
\begin{itemize}
    \item \textbf{GDG (Gradient Descent with Bandit Feedback) :} Operates in the orthonormal span $O$ obtained 
    from QR decomposition $U = OP$, with one-point random-direction smoothing and projection in $O$-space. See Supplement for pseudocode.
    \item \textbf{Explore--Exploit :} Alternates between uniform exploration in $\mathcal{B}_P(R)$, 
    ridge regression fit of a quadratic surrogate, and trust-region exploitation. See Supplement for pseudocode.
    \item \textbf{ADEPT:} Updates attribute prices directly in feature space via one-point gradient estimator with feasibility enforced by box constraints on $\hat{\theta}_t$.
\end{itemize}

\noindent\paragraph{Demand Regimes}
We consider four demand regimes. Each regime differs in the temporal evolution of the baseline demand $z_t$ and the cross-elasticity operator $V_t$ in the demand model specified in \Cref{eq:lddm}, while all other parameters remain fixed.

\begin{enumerate}
    \item \textbf{Stationary demand (S1):} 
    Both baseline demand and cross-elasticities remain fixed, i.e.\ $z_t = z$, $V_t = V$ for all $t$.
    This corresponds to a stable market with no temporal variation in consumer preferences or substitution effects. In \textbf{S1}, ADEPT maintains nearly flat regret, GDG grows slowly due to variance in one-point updates, and Explore-Exploit accumulates error from surrogate bias.

    \item \textbf{Structural shocks (S2):} 
    At $t = T/3$ and $t = 2T/3$, the baseline and elasticity parameters $(z_t, V_t)$ are generated again with the same assumptions independently in each new phase. This simulates sudden market shocks such as competitor entry or seasonal change. In \textbf{S2}, ADEPT adapts rapidly after each shock, while GDG shows pronounced regret spikes and Explore-Exploit requires many rounds to re-learn post-shock surrogates.

    \item \textbf{Drifting demand (S3):} 
    Parameters evolve gradually according to stochastic drift:
    \[
    z_{t+1} = z_t + w_t, \qquad V_{t+1} = \Pi_{\mathcal{V}}(V_t + W_t),
    \]
    where $w_t \sim \mathcal{N}(0, I)$ and $W_t \sim \mathcal{N}(0, 0.1 I)$, and $\Pi_{\mathcal{V}}$ projects into the positive-definite cone. This captures smooth preference shifts over time. 

    \item \textbf{Misspecified demand (S4):} 
    Instead of low-rank $V_t$, the true demand operator is taken to be full-rank with heterogeneous eigenvalues. 
    This explicitly violates the low-rank assumption and tests robustness of the algorithms under model misspecifications. In \textbf{S4}, ADEPT remains competitive despite misspecification, while GDG and 
    Explore-Exploit incur higher regret due to structural mismatch with the full-rank $M_t$.
\end{enumerate}

\FloatBarrier
\begin{figure*}[!t]
  \centering
  \includegraphics[width=0.85\textwidth]{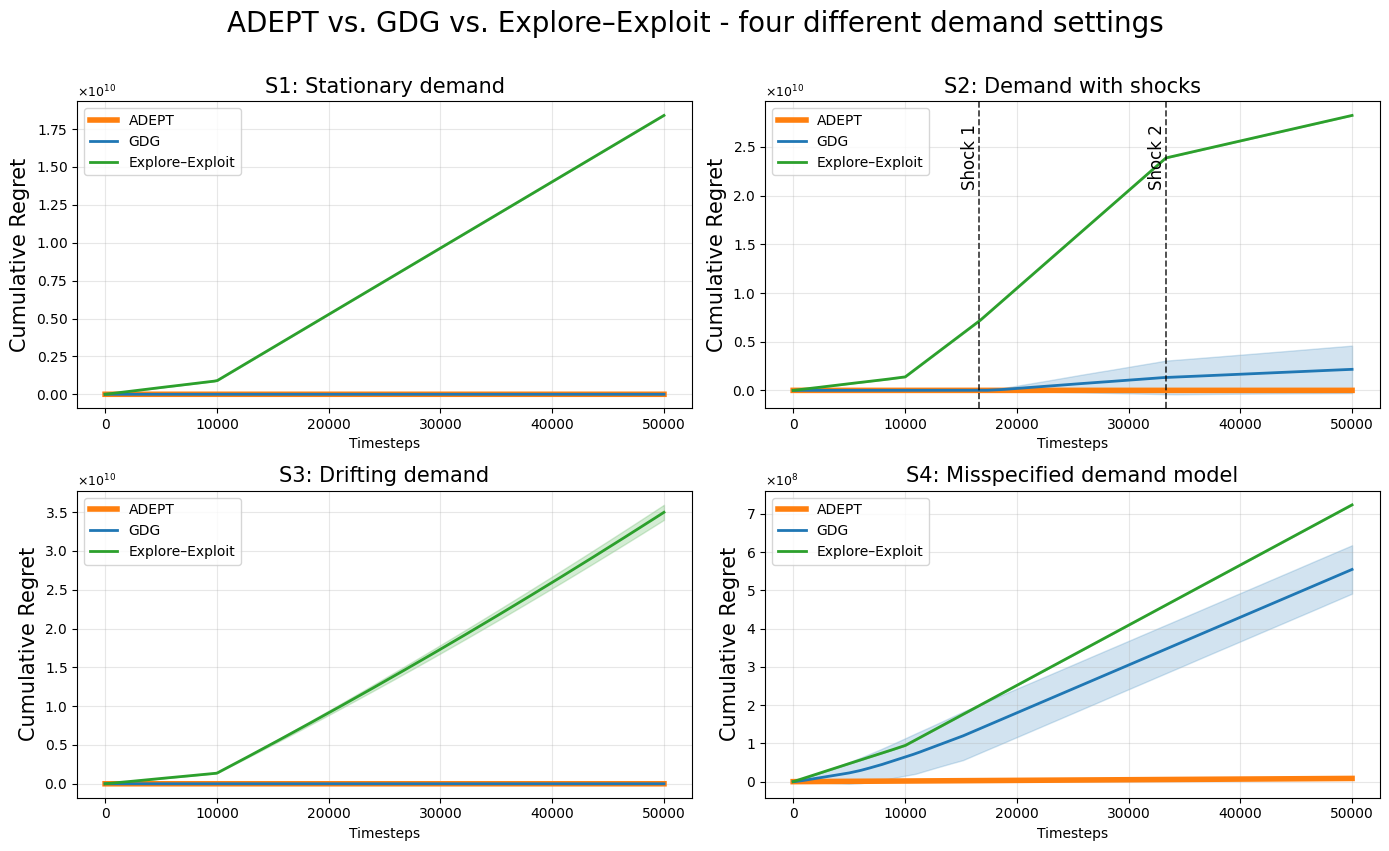}
  \caption{\textbf{Cumulative regret (mean $\pm$ s.d.) over $T{=}50{,}000$ rounds.} Top-left: S1 Stationary; top-right: S2 Shocks (vertical dashed lines mark change points); bottom-left: S3 Drifting; bottom-right: S4 Misspecified. All methods share the same price ball and comparator.}
  \label{fig:exp-four-settings}
\end{figure*}

\noindent In all experiments, ADEPT’s attribute-space updates with a structured $U$ deliver lower cumulative regret and stronger adaptability. Finally, to verify consistency between empirical and theoretical performance, we estimate ADEPT’s regret-growth rate through tail-slope analysis in the next subsection.

\balance
\subsection{Empirical rate verification via tail–slope fitting}
\label{emp_val_regretslope}
We verify that \textsc{ADEPT} attains the rate in \Cref{th:main}.  
If $R(t)\!\asymp\! t^{\alpha}$, then $\log R(t)=a+\alpha\log t$ is linear in $\log t$ with slope $\alpha$.  
To estimate $\alpha$, we fit a \emph{tail secant} over the last $\rho=0.5$ of the horizon ($T=50{,}000$):
\[
\widehat{\alpha}_{\text{tail}}
= \frac{\log R(T)-\log R(t_0)}{\log T-\log t_0},
\quad t_0=\lfloor(1-\rho)T\rfloor.
\]

\paragraph{Results (Fig.~\ref{fig:regret-tail-slope})}
The fitted slope,
$\widehat{\alpha}_{\text{tail}}=0.742$,
closely matches the theoretical $t^{3/4}$ rate. 
Early curvature arises from pre–asymptotic effects; the tail is nearly linear, confirming that for $(N{=}60,\,d{=}6,\,T{=}50{,}000)$, \textsc{ADEPT} empirically achieves $\tilde{\mathcal{O}}(T^{3/4})$ regret. A tail fraction $\rho=0.5$ balances bias and variance. 

\begin{figure}[H]
  \centering
  \includegraphics[width=0.47\textwidth]{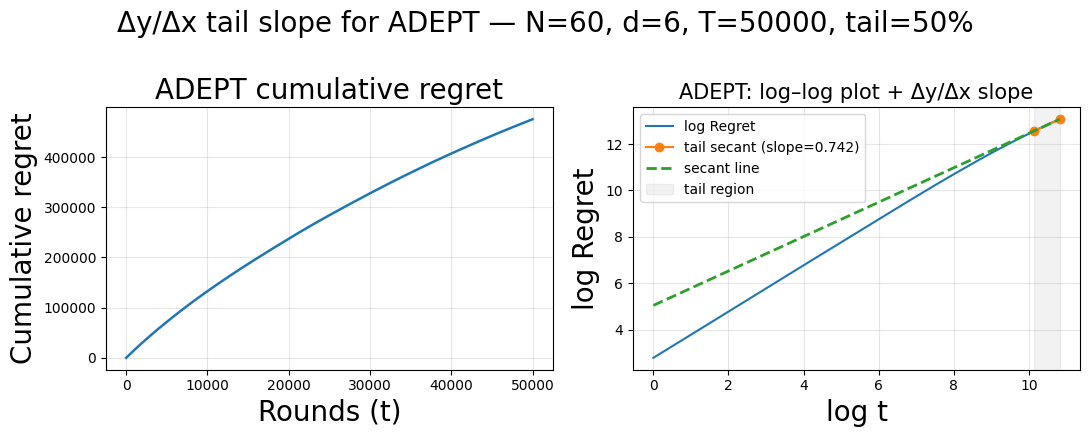}
  \caption{\textbf{Tail–slope analysis for \textsc{ADEPT}.} Left: cumulative regret $R(t)$.
  Right: log–log plot of $R(t)$ with the tail window (gray) and secant fit; estimated slope
  $\widehat{\alpha}_{\text{tail}}=0.742\approx 3/4$.}
  \label{fig:regret-tail-slope}
\end{figure}

\subsection{Scalability and Runtime Evidence}
We measured per-epoch runtime for all baselines under identical conditions. ADEPT is the fastest, with runtimes as shown in Table \ref{tab:runtimes_table}.

\begin{table}[H]
\centering
\caption{Average time per epoch (in seconds) for ADEPT, OPOK, GDG, and EE across different $(N, d)$ settings.}
\label{tab:runtimes_table}
\begin{tabular}{lcccc}
\hline
\textbf{Setting} & \textbf{ADEPT} & OPOK & GDG & EE \\
\hline
$N = 60,\ d = 6$   & \textbf{0.000041} & 0.000086 & 0.000054 & 0.000075 \\
$N = 100,\ d = 10$ & \textbf{0.000063} & 0.000092 & 0.000072 & 0.000078 \\
$N = 1000,\ d = 100$ & \textbf{0.000059} & 0.000075 & 0.000065 & 0.000559 \\
\hline
\end{tabular}
\end{table}

\section{Conclusion}
In this paper, we considered dynamic pricing in markets with similar products in which the product attributes affect the demand and pricing. For this setting, we proposed the \textbf{Additive Feature Decomposition based Low-Dimensional Demand (AFDLD)} model, which expresses prices as additive functions of observable attributes while capturing substitution effects among related products. Building on this, \textbf{ADEPT}—a projection- and gradient-free online learner — optimizes directly in attribute space and achieves a sublinear regret of $\Tilde{O}(\sqrt{d}\,T^{3/4})$. 
Experiments across stationary, shocked, drifting, and misspecified regimes show that ADEPT attains low regret, rapid adaptation, and interpretable attribute-level price explanations. These results establish that interpretability and efficiency can coexist in dynamic pricing. \newline
Future work will extend AFDLD to nonlinear and personalized settings, evolving attribute spaces, and multi-agent markets, advancing toward autonomous and explainable pricing systems.




\bibliographystyle{ACM-Reference-Format} 
\bibliography{sample}

@inproceedings{cohen2016contextual,
  title={Contextual bandits with latent confounders: An NMF approach},
  author={Cohen, Alon and Hazan, Elad and Koren, Tomer},
  booktitle={Proceedings of the 33rd International Conference on Machine Learning},
  year={2016}
}

@inproceedings{sen2016contextual,
  title={Contextual bandits with latent confounders: Model and algorithm},
  author={Sen, Rahul and Mathew, Bipul and Bhalgat, Yash and Jain, Prateek and Varma, Manik},
  booktitle={Proceedings of the 19th International Conference on Artificial Intelligence and Statistics},
  year={2016}
}

@inproceedings{ban2021personalized,
  title={Personalized pricing with nonparametric demand learning},
  author={Ban, Gah-Yi and Keskin, Narges and Liu, Zhaoxu},
  booktitle={Proceedings of the 38th International Conference on Machine Learning},
  year={2021},
  pages={517--527}
}

@article{keskin2020dynamic,
  title={Dynamic pricing with demand learning and reference effects},
  author={Keskin, Narges and Zeevi, Assaf},
  journal={Operations Research},
  volume={68},
  number={5},
  pages={1381--1405},
  year={2020},
  publisher={INFORMS}
}

@inproceedings{qi2021meta,
  title={Meta-learning for contextual bandit exploration},
  author={Qi, Yutian and Dong, Weinan and Shen, Kaixiang and Tang, Zhi and Zha, Hongyuan},
  booktitle={Advances in Neural Information Processing Systems},
  year={2021}
}

@inproceedings{zhu2021improving,
  title={Improving demand forecasting in e-commerce with attention-based low-rank embeddings},
  author={Zhu, Xiaohan and Huang, Yifan and Lin, Bo and Wu, Yichong},
  booktitle={Proceedings of the 27th ACM SIGKDD Conference on Knowledge Discovery \& Data Mining},
  year={2021},
  pages={3780--3790}
}

@inproceedings{wang2021low,
  title={Low-rank bandits with side observations},
  author={Wang, Zhen and Liu, Haoran and Li, Lihong and Liu, Qiang},
  booktitle={Proceedings of the 38th International Conference on Machine Learning},
  year={2021},
  pages={10766--10776}
}

@inproceedings{chen2021interpretable,
  title={Interpretable learning for travel demand and pricing with additive models},
  author={Chen, Xinyi and Zhang, Jun and Li, Jie},
  booktitle={Proceedings of the 27th ACM SIGKDD International Conference on Knowledge Discovery \& Data Mining},
  year={2021},
  pages={1072--1080}
}

@inproceedings{tang2022personalized,
  title={Personalized and interpretable pricing for fashion e-commerce using multimodal decomposition},
  author={Tang, Meng and Lu, Yile and Huang, Xiaoyi and Sun, Meng and Wang, Yu},
  booktitle={Proceedings of the 2022 Conference on Empirical Methods in Natural Language Processing},
  year={2022}
}

@inproceedings{gupta2020pricing,
  title={Pricing airline ancillaries: A discrete choice modeling approach},
  author={Gupta, Ananya and Chintagunta, Pradeep and Jain, Dipak},
  booktitle={Proceedings of the 2020 INFORMS Annual Meeting},
  year={2020}
}

@article{aleksandrov2023modular,
  title={Modular pricing of digital subscription bundles},
  author={Aleksandrov, Bozhidar and Leontjeva, Anna and Hemphill, Tasha and Paquet, Ulrich},
  journal={Journal of Revenue and Pricing Management},
  volume={22},
  number={1},
  pages={33--48},
  year={2023},
  publisher={Springer}
}

@article{mueller2019low,
  title={Low-rank bandit methods for high-dimensional dynamic pricing},
  author={Mueller, Jonas W and Syrgkanis, Vasilis and Taddy, Matt},
  journal={Advances in Neural Information Processing Systems},
  volume={32},
  year={2019}
}

@book{ricci2015recommender,
  title={Recommender systems handbook},
  author={Ricci, Francesco and Rokach, Lior and Shapira, Bracha},
  year={2015},
  publisher={Springer}
}

@article{koren2009matrix,
  title={Matrix factorization techniques for recommender systems},
  author={Koren, Yehuda and Bell, Robert and Volinsky, Chris},
  journal={Computer},
  volume={42},
  number={8},
  pages={30--37},
  year={2009}
}

@article{jannach2016recommendation,
  title={Recommendation systems for e-commerce},
  author={Jannach, Dietmar and Adomavicius, Gediminas},
  journal={User Modeling and Personalization},
  year={2016}
}

@article{taddy2019bayesian,
  title={A nonparametric Bayesian analysis of heterogeneous treatment effects in digital marketing},
  author={Taddy, Matt and Gardner, Matt and Chen, Liyuan and Draper, David},
  journal={Journal of Marketing Research},
  year={2019}
}

@inproceedings{chen2016empirical,
  title={An empirical analysis of algorithmic pricing on Amazon Marketplace},
  author={Chen, Liyuan and Mislove, Alan and Wilson, Christo},
  booktitle={Proceedings of the 25th International Conference on World Wide Web},
  pages={1339--1349},
  year={2016}
}

@article{elmaghraby2003dynamic,
  title={Dynamic pricing in the presence of inventory considerations: Research overview},
  author={Elmaghraby, Wedad and Keskinocak, Pinar},
  journal={Production and Operations Management},
  volume={12},
  number={4},
  pages={535--552},
  year={2003}
}

@inproceedings{chen2016dynamic,
  title={Dynamic pricing with demand learning and reference effects},
  author={Chen, Xi and Gallego, Guillermo and Sinha, Anant},
  booktitle={Advances in Neural Information Processing Systems (NeurIPS)},
  year={2016}
}

@inproceedings{kawale2015price,
  title={Price Optimization for Revenue Management using Reinforcement Learning},
  author={Kawale, Jaya and Bansal, Trapit and Chawla, Sanjay and Sharma, Aditya and Kveton, Branislav and Theocharous, Georgios and Wilkinson, Dennis},
  booktitle={Proceedings of the 24th ACM International on Conference on Information and Knowledge Management (CIKM)},
  year={2015}
}

@article{chen2019personalized,
  title={Personalized dynamic pricing with machine learning},
  author={Chen, Lijie and Gallego, Guillermo},
  journal={Handbook of Pricing Analytics},
  pages={1--29},
  year={2019},
  publisher={Springer}
}

@article{berry1995automobile,
  title={Automobile prices in market equilibrium},
  author={Berry, Steven and Levinsohn, James and Pakes, Ariel},
  journal={Econometrica},
  volume={63},
  number={4},
  pages={841--890},
  year={1995},
  publisher={Wiley Online Library}
}

@article{hosken2008retail,
  title={Retail price drivers and retail pricing strategies},
  author={Hosken, Daniel and Reiffen, David and Varney, Andrew and Taylor, Luke},
  journal={Journal of Economics \& Management Strategy},
  volume={17},
  number={1},
  pages={147--187},
  year={2008},
  publisher={Wiley Online Library}
}

@article{andrews2016estimating,
  title={Estimating cross-price elasticities from aggregate retail data},
  author={Andrews, Michelle and Currim, Imran S},
  journal={Journal of Marketing Research},
  volume={53},
  number={2},
  pages={187--198},
  year={2016},
  publisher={SAGE Publications}
}

@book{chung1997spectral,
  title={Spectral Graph Theory},
  author={Chung, Fan R. K.},
  volume={92},
  year={1997},
  publisher={American Mathematical Society},
  series={CBMS Regional Conference Series in Mathematics}
}

@book{hornjohnson2012matrix,
  title={Matrix Analysis},
  author={Horn, Roger A. and Johnson, Charles R.},
  edition={2nd},
  year={2012},
  publisher={Cambridge University Press}
}

@inproceedings{flaxman2005bandit,
  title={Online convex optimization in the bandit setting: gradient descent without a gradient},
  author={Flaxman, Abraham D. and Kalai, Adam Tauman and McMahan, H. Brendan},
  booktitle={Proceedings of the Sixteenth Annual ACM-SIAM Symposium on Discrete Algorithms},
  pages={385--394},
  year={2005},
  organization={SIAM}
}

@book{varga2004gershgorin,
  title        = {Gershgorin and His Circles},
  author       = {Varga, Richard S.},
  year         = {2004},
  publisher    = {Springer-Verlag},
  address      = {Berlin, Heidelberg}
}

@article{bubeck2012survey,
  title={Regret analysis of stochastic and nonstochastic multi-armed bandit problems},
  author={Bubeck, S{\'e}bastien and Cesa-Bianchi, Nicol{\`o}},
  journal={Foundations and Trends in Machine Learning},
  volume={5},
  number={1},
  pages={1--122},
  year={2012},
  publisher={Now Publishers Inc.}
}

@misc{ZhangABPSSRN,
  author       = {Mengzhenyu Zhang and Christopher Ryan and Wei Sun},
  title        = {Attribute-based Pricing: A nested-logit approach},
  howpublished = {SSRN Electronic Journal},
  year         = {2022},
  doi          = {10.2139/ssrn.4258247},
  url          = {https://papers.ssrn.com/sol3/papers.cfm?abstract_id=4258247}
}

@misc{SalesforceABPHelp,
  author       = {{Salesforce}},
  title        = {Attribute-Based Pricing},
  howpublished = {\url{https://help.salesforce.com/s/articleView?id=ind.comms_attribute_based_pricing.htm&type=5}},
  note         = {Salesforce Help, Communications Cloud. Accessed: 2025-10-01},
  year         = {2025}
}

@article{henaff2019explicit,
  title={Explicit explore-exploit algorithms in continuous state spaces},
  author={Henaff, Mikael},
  journal={Advances in Neural Information Processing Systems},
  volume={32},
  year={2019}
}

\section{Supplementary Material}
This supplementary section provides detailed derivations, proofs, and additional algorithmic information supporting the main paper. 

\begin{itemize}
    \item \textbf{Section 1} formalizes the convexity discussion of the AFDLD objective and establishes positive (semi-)definiteness of key matrices through the Laplacian interpretation.
    \item \textbf{Section 2} presents the proof of Theorem~1 (Regret of ADEPT).
    \item \textbf{Section 3} lists pseudocodes for baseline algorithms (GDG and Explore--Exploit) used in the main experiments.
\end{itemize}

\bigskip
\subsection{Convexity and Laplacian Structure (Section A of Main Paper)}
\subsection{Setup}
Let $U\in\mathbb{R}^{N\times d}$ be the product-feature matrix and $V\in\mathbb{R}^{d\times d}$ symmetric.  
We define $V$-inner product as $\langle u_i,u_j\rangle_V := u_i^\top V {u_{{add}_j}}$ and $p=\Uadd\theta$ to denote the price vector.  

The demand model is
\[
q \;=\; \Uadd z \;-\; (M_1{+}M_2{+}M_3)\,p \;+\; \varepsilon,
\]
where the matrices are given by
\[
\begin{aligned}
M_1(i,i) &= \alpha_{ii}\,\langle u_i,u_i\rangle_V,\\
M_2(i,j) &= 
\begin{cases}
-\,\alpha_{ij}\,\langle u_i,u_j\rangle_V,& i\neq j,\\[4pt]
0,& i=j,
\end{cases}\\
M_3(i,i) &= \sum_{j}\alpha_{ij}\,\langle u_i,u_j\rangle_V.
\end{aligned}
\]

where i and j denote the attribute indices. The (noise-free) revenue function is
\[
R(\theta) \;=\; \theta^\top u^\top u\,z \;-\; \theta^\top u^\top(M_1{+}M_2{+}M_3)u\,\theta.
\]

\subsection{Assumptions}
We make the following modeling assumptions:
\begin{enumerate}\itemsep4pt
    \item[(A1)] $\alpha_{ij}\ge 0$ for all $i,j$ (nonnegative elasticities).
    \item[(A2)] $V\succeq 0$ and $\langle u_i,u_j\rangle_V \ge 0$ for $i\neq j$ (nonnegative cross-similarities).
\end{enumerate}

\paragraph{On Assumptions (A1)--(A2).} 
Assumption (A1) requires $\alpha_{ij} \geq 0$ for all $i,j$, which is consistent with economic intuition that price increases cannot stimulate demand and ensures that the Laplacian weights in $M_2+M_3$ are nonnegative. Assumption (A2) stipulates that $V \succeq 0$ and $\langle u_i,u_j\rangle_V \geq 0$ for $i \neq j$, guaranteeing that similarity measures are nonnegative and enabling the Laplacian interpretation of cross-elasticities. Importantly, these are not restrictive conditions: as shown in Section~5.1 of \cite{mueller2019low}, any non-orthogonal feature matrix $U$ can be orthogonalized and the corresponding $V$ reparameterized into a positive semidefinite operator without loss of generality. Thus, (A1)--(A2) represent canonical normalizations that preserve the expressiveness of the model while ensuring convexity and interpretability.

\subsection{Laplacian Structure of $M_2+M_3$}

To analyze $M_2+M_3$, we first symmetrize the interaction terms since only the symmetric part contributes
to a quadratic form. We define
\[
w_{ij} := \tfrac{\alpha_{ij}+\alpha_{ji}}{2}\,\langle u_i,u_j\rangle_V, \qquad i\neq j, \quad w_{ii}:=0.
\]
By assumptions (A1)--(A2), $w_{ij}\ge 0$ for all $i\neq j$.
Collect these weights into a symmetric matrix $W=[w_{ij}]\in\mathbb{R}^{N\times N}$.
Then $W\mathbf{1}$ denotes the vector of weighted degrees, i.e.
\[
(W\mathbf{1})_i = \sum_{j=1}^N w_{ij},
\]
the total interaction weight incident to node $i$ in the graph defined by the items $\{1,\dots,N\}$.

Now let $D:=\mathrm{Diag}(W\mathbf{1})$.
That is, $D$ is a diagonal matrix where $D_{ii}$ equals the weighted degree of node $i$.
It follows directly from the definitions of $M_2$ and $M_3$ that
\[
M_2+M_3 = D - W =: L,
\]
where $L$ is the weighted \emph{graph Laplacian} associated with the similarity graph $(w_{ij})$.

\paragraph{Quadratic form of the Laplacian.}
For any $x\in\mathbb{R}^N$,
\[
x^\top L x = \sum_{i} D_{ii} x_i^2 - \sum_{i\neq j} w_{ij} x_i x_j
= \frac{1}{2}\sum_{i,j=1}^N w_{ij}(x_i - x_j)^2 \;\ge 0.
\]
This expression makes the positive semidefiniteness immediate: each term $(x_i-x_j)^2$ is nonnegative,
and the weights $w_{ij}$ are nonnegative.
It also provides a clear interpretation:
\begin{itemize}
    \item If two items $i$ and $j$ are strongly connected (large $w_{ij}$),
    then the quadratic penalty discourages large differences between $x_i$ and $x_j$.
    \item If items are disconnected (no edges), then their coordinates contribute independently.
\end{itemize}

\paragraph{Nullspace characterization.}
The nullspace of $L$ is well known in spectral graph theory \cite{chung1997spectral}:
\[
\ker(L) = \mathrm{span}\{\mathbf{1}_C : C \text{ is a connected component of the graph }(w_{ij})\},
\]
where $\mathbf{1}_C$ denotes the indicator vector of component $C$.
In words: $Lx=0$ precisely when $x$ is constant on each connected component of the similarity graph.
This characterization plays a central role in establishing conditions under which
$M=M_1+M_2+M_3$ is strictly positive definite (Theorem~\ref{thm:pd}).

\begin{figure}[t]
\centering
\begin{tikzpicture}[
    scale=1.0,
    every node/.style={font=\small},
    node/.style={circle,draw,thick,minimum size=9mm,fill=blue!10},
    wlabel/.style={fill=white,inner sep=1pt}
]

\node[node] (n1) at (3,0) {1};
\node[node] (n2) at (1.5,2.2) {2};
\node[node] (n3) at (0,0) {3};
\node[node] (n4) at (1.5,-2.2) {4};

\draw[thick] (n1) -- (n2) node[midway,sloped,above,wlabel] {$0.8$};
\draw[thick] (n2) -- (n3) node[midway,sloped,above,wlabel] {$0.6$};
\draw[thick] (n3) -- (n4) node[midway,sloped,below,wlabel] {$0.7$};
\draw[thick] (n4) -- (n1) node[midway,sloped,below,wlabel] {$0.5$};
\draw[thick] (n1) -- (n3) node[midway,above,wlabel] {$0.4$};

\node[align=center,below=6pt of n4,inner sep=0pt]
{\footnotesize Example similarity graph for $M_2{+}M_3$ \quad
($L=D-W$, with $W_{ij}=w_{ij}$,\; $D=\mathrm{Diag}(W\mathbf{1})$)};
\end{tikzpicture}
\caption{Weighted similarity graph with edge weights $w_{ij}$.
The Laplacian is $L=D-W$, where $W\mathbf{1}$ denotes the
\emph{degree vector} whose $i$-th entry is $\sum_{j} w_{ij}$.}
\label{fig:laplacian-example}
\end{figure}

\begin{proof}
This follows directly from the definition of Laplacians in spectral graph theory 
\cite[Ch.~1]{chung1997spectral}.
\end{proof}

\subsection{Diagonal Matrix $M_1$}

\begin{lemma}[$M_1$ is PSD]
$M_1=\mathrm{Diag}(\alpha_{ii}\,\langle u_i,u_i\rangle_V)\succeq 0$ under (A1)--(A2).
\end{lemma}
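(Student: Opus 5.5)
The plan is to use the fact that a diagonal matrix is positive semidefinite exactly when all of its diagonal entries are nonnegative. So the whole statement reduces to checking, for each $i\in\{1,\dots,N\}$, that
\[
(M_1)_{ii}=\alpha_{ii}\,\langle u_i,u_i\rangle_V\;\ge\;0 .
\]
Once that is shown, for any $x\in\mathbb{R}^N$ we have $x^\top M_1 x=\sum_i \alpha_{ii}\langle u_i,u_i\rangle_V\,x_i^2\ge 0$. This is the same argument style as the Laplacian quadratic form above, but with only the diagonal terms present.

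The sign check splits into two factors. The first factor is nonnegative by (A1): $\alpha_{ii}\ge 0$. For the second factor, $\langle u_i,u_i\rangle_V=u_i^\top V u_i$ is the quadratic form of $V$ evaluated at $u_i$, and (A2) gives $V\succeq 0$, so $u_i^\top V u_i\ge 0$. Note that the off-diagonal part of (A2), namely $\langle u_i,u_j\rangle_V\ge 0$ for $i\neq j$, is not needed here. Only $V\succeq 0$ is used. A product of two nonnegative reals is nonnegative, which finishes the entrywise check.

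The only point needing care, and the nearest thing to an obstacle, is notational consistency. The supplement's definition of $\langle u_i,u_j\rangle_V$ has a typo in its second argument, and the main text instead writes $M_1=2\,\mathrm{Diag}(UV_tU^\top)$. I would read the diagonal entry as $u_i^\top V u_i$ in both versions, since the diagonal of $UVU^\top$ is exactly $(u_i^\top V u_i)_i$. A positive scalar factor such as $2$ does not affect positive semidefiniteness, so the conclusion holds under either convention. If $V$ is not assumed symmetric, I would note that $u_i^\top V u_i=u_i^\top \tfrac{1}{2}(V+V^\top)u_i$ and read $V\succeq 0$ as a statement about this symmetric part, which is what the setup already assumes by taking $V$ symmetric.

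Finally, I would remark that $M_1$ is strictly positive definite when every $\alpha_{ii}>0$ and every $u_i\notin\ker V$, for example when $V\succ 0$ and no row of $U$ is zero. This is the form in which the lemma will combine with the nullspace characterization of $L=M_2+M_3$ in Theorem~\ref{thm:pd}.
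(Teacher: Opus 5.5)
Your proposal is correct and follows the same route as the paper, which simply notes that each diagonal entry $\alpha_{ii}\langle u_i,u_i\rangle_V$ is nonnegative and concludes that the diagonal matrix $M_1$ is PSD. You go further than the paper by spelling out that $\alpha_{ii}\ge 0$ comes from (A1) and $u_i^\top V u_i\ge 0$ comes from the $V\succeq 0$ part of (A2) alone, which is a useful clarification.
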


\begin{proof}
Each diagonal entry is nonnegative by assumption, hence $M_1$ is positive semidefinite.
\end{proof}

\subsection{PSD and PD of $M_1+M_2+M_3$}

\begin{theorem}[PSD/PD characterization]\label{thm:pd}
Let $M=M_1+M_2+M_3$. Then:
\begin{enumerate}
    \item $M\succeq 0$ always, as the sum of PSD matrices.
    \item $M\succ 0$ if and only if $M_1$ is positive definite on $\ker(M_2+M_3)$.  
    Equivalently, in every connected component of the graph $(w_{ij})$ there exists some $i$ such that $\alpha_{ii}\,\langle u_i,u_i\rangle_V>0$.
\end{enumerate}
\end{theorem}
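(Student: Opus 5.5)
Part (1) follows from the two PSD facts already established. By the Laplacian discussion, $M_2+M_3=L=D-W$ with quadratic form $x^\top L x=\tfrac12\sum_{i,j}w_{ij}(x_i-x_j)^2\ge 0$. One caveat: $M_2,M_3$ are built from the possibly asymmetric $\alpha_{ij}$, so I will work with quadratic forms, i.e.\ with the symmetric part of $M$. For this to match $L$, I read $M_3$'s diagonal as using the symmetrized weights $w_{ij}$; otherwise there is a diagonal mismatch $\sum_j(\alpha_{ij}-\alpha_{ji})\langle u_i,u_j\rangle_V/2$, which I would flag as a standing convention. By the lemma on $M_1$, $M_1\succeq 0$. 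Then for every $x$, $x^\top Mx=x^\top M_1x+x^\top Lx\ge 0$, so $M\succeq 0$.

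For part (2), I will use the standard fact that for PSD $A,B$, $\ker(A+B)=\ker A\cap\ker B$. If $x^\top(A+B)x=0$, then both nonnegative terms vanish, and for a PSD matrix $x^\top Ax=0$ forces $Ax=0$ (take the square root $A^{1/2}$). Hence $M\succ 0$ iff $\ker M_1\cap\ker L=\{0\}$. This says exactly that $x^\top M_1x>0$ for every nonzero $x\in\ker L$, which is the first form of (2).

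To translate into the graph condition, I invoke the nullspace characterization quoted above: $\ker L=\mathrm{span}\{\mathbf 1_C\}$ over connected components $C$ of $(w_{ij})$. A general element is $x=\sum_C c_C\mathbf 1_C$. Since $M_1$ is diagonal with entries $m_i:=\alpha_{ii}\langle u_i,u_i\rangle_V\ge 0$,
\[
x^\top M_1 x=\sum_C c_C^2\sum_{i\in C} m_i .
\]
This is positive for all nonzero $(c_C)$ iff $\sum_{i\in C}m_i>0$ for every $C$. Because $m_i\ge0$, that holds iff each component contains some $i$ with $m_i>0$. For the converse direction, if some component $C_0$ has all $m_i=0$, then $x=\mathbf 1_{C_0}$ is a nonzero vector with $x^\top Mx=0$, so $M\not\succ 0$.

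The main obstacle is bookkeeping rather than depth. First, I must justify that $M_2+M_3$ (symmetrized) really equals $D-W$, including the sign convention that $M_2$'s off-diagonals are $-\alpha_{ij}\langle u_i,u_j\rangle_V$ and that the $j=i$ term is excluded from $M_3$ or absorbed into $M_1$. Second, I must be careful that positive definiteness is meant for the symmetric part of $M$. I would state both explicitly before running the kernel argument.
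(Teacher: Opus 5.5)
Your proposal is correct and follows essentially the same route as the paper. Both arguments use $M_1\succeq 0$, $M_2+M_3=L\succeq 0$, and the fact that $\ker L$ is spanned by the connected-component indicators; this gives $x^\top Mx>0$ off $\ker L$, while on $\ker L$ we have $x^\top Mx=x^\top M_1x$. Your version is more complete than the paper's in three respects: the paper argues only sufficiency, and your $\mathbf{1}_{C_0}$ argument supplies the missing necessity direction; the paper silently assumes the symmetrized weights you correctly flag as needed for $M_2+M_3=D-W$; and it likewise assumes, without saying so, that the $j=i$ term is excluded from $M_3$.
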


\begin{proof}
By Lemmas above, $M_1\succeq 0$ and $M_2+M_3\succeq 0$.  
The nullspace of $M_2+M_3$ is spanned by indicator vectors of connected components \cite{chung1997spectral}.  
If $x\notin\ker(M_2+M_3)$, then $x^\top (M_2+M_3)x>0$ and hence $x^\top Mx>0$.  
If $x\in\ker(M_2+M_3)\setminus\{0\}$, then $x^\top Mx=x^\top M_1 x$.  
This is strictly positive provided at least one diagonal in $M_1$ is positive within each connected component. 
\end{proof}

\subsection{Diagonal Dominance Criterion}

\begin{corollary}[Strict diagonal dominance $\Rightarrow$ PD]\label{cor:gersh}
If
\[
M_{ii} > \sum_{j\neq i}|M_{ij}| \qquad \text{for all } i,
\]
with $M_{ii}>0$, then $M\succ 0$ by Gershgorin’s circle theorem
\cite[Thm.~6.2.27]{hornjohnson2012matrix}, \cite{varga2004gershgorin}.
In our setting,
\[
M_{ii}=\alpha_{ii}\,\langle u_i,u_i\rangle_V + \sum_{j\neq i}w_{ij}, \quad
M_{ij}=-w_{ij}\ (i\neq j).
\]
Thus strict diagonal dominance reduces to requiring $\alpha_{ii}\,\langle u_i,u_i\rangle_V>0$ for every $i$.
\end{corollary}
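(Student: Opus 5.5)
The plan is to treat $M$ through its quadratic form, since positive definiteness is a statement about $x^\top M x$. That form only sees the symmetric part $\tfrac12(M+M^\top)$. So I first replace $M$ by this symmetric part, which is the matrix with the entries displayed in the corollary:
\[
M_{ii}=\alpha_{ii}\langle u_i,u_i\rangle_V+\sum_{j\neq i}w_{ij},\qquad M_{ij}=-w_{ij}\ (i\neq j).
\]
Here I use the convention of the Laplacian subsection, namely $M_2+M_3=L=D-W$ built from the symmetrized weights $w_{ij}$. For the diagonal of $M_3$ to coincide with $D_{ii}=\sum_j w_{ij}$, the degree must be formed from the averaged coefficients $\tfrac{\alpha_{ij}+\alpha_{ji}}{2}$. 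I expect this bookkeeping to be the main (if modest) obstacle. If $M_3$ is defined with unsymmetrized $\alpha_{ij}$, I would instead apply the argument directly to $\tfrac12(M+M^\top)$ and recompute its diagonal. Row sums of $\alpha_{ij}$ and column sums of $\alpha_{ji}$ then average to the same quantity, so the diagonal is unchanged.

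With a real symmetric matrix in hand, the general claim follows from Gershgorin's circle theorem. Every eigenvalue $\lambda$ of $M$ is real and lies in some disc $|\lambda-M_{ii}|\le\sum_{j\neq i}|M_{ij}|$. Therefore
\[
\lambda\ \ge\ \min_i\Big(M_{ii}-\sum_{j\neq i}|M_{ij}|\Big)\ >\ 0
\]
by strict diagonal dominance. All eigenvalues being positive gives $M\succ 0$. The hypothesis $M_{ii}>0$ is in fact implied by strict dominance, so it needs no separate use.

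For the specialization, I invoke (A1)--(A2), which give $w_{ij}\ge 0$. Hence $|M_{ij}|=w_{ij}$, and
\[
M_{ii}-\sum_{j\neq i}|M_{ij}|=\alpha_{ii}\langle u_i,u_i\rangle_V.
\]
Strict diagonal dominance is therefore exactly the condition $\alpha_{ii}\langle u_i,u_i\rangle_V>0$ for every $i$.

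As a sanity check that bypasses Gershgorin, I would also record the identity
\[
x^\top Mx=\sum_i\alpha_{ii}\langle u_i,u_i\rangle_V\,x_i^2+\tfrac12\sum_{i,j}w_{ij}(x_i-x_j)^2 .
\]
It comes from $M_1$ plus the Laplacian quadratic form already derived. Under the stated condition the first sum alone is positive for $x\neq0$, which confirms $M\succ0$. The check is consistent with Theorem~\ref{thm:pd}, since the condition forces a positive diagonal entry of $M_1$ in every connected component.
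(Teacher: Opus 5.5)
Your main line is the paper's own argument, done more carefully. With $M=M_1+L$ symmetric, Gershgorin puts every (real) eigenvalue at or above $\min_i\bigl(M_{ii}-\sum_{j\neq i}|M_{ij}|\bigr)$, and $w_{ij}\ge 0$ turns that margin into $\alpha_{ii}\langle u_i,u_i\rangle_V$. Your insistence on symmetry is warranted: for a nonsymmetric matrix, Gershgorin controls only the real parts of eigenvalues, not $x^\top Mx$. Your quadratic-form identity also gives a cleaner, Gershgorin-free route, namely $M_1\succ 0$ plus $L\succeq 0$.

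The step that fails is your fallback for unsymmetrized $\alpha_{ij}$. Symmetrization leaves the diagonal untouched. So $\tfrac12(M+M^\top)$ has diagonal entries $\alpha_{ii}\langle u_i,u_i\rangle_V+\sum_{j\neq i}\alpha_{ij}\langle u_i,u_j\rangle_V$ (row sums only), but off-diagonal entries $-w_{ij}$. Row and column sums agree in total, not node by node. The dominance margin is then
\[
\alpha_{ii}\langle u_i,u_i\rangle_V+\tfrac12\sum_{j\neq i}(\alpha_{ij}-\alpha_{ji})\langle u_i,u_j\rangle_V ,
\]
which can be negative. Likewise, your sanity-check identity picks up the extra term $\tfrac12\sum_i x_i^2\sum_{j\neq i}(\alpha_{ij}-\alpha_{ji})\langle u_i,u_j\rangle_V$.

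Concretely, take $N=2$, $\langle u_i,u_j\rangle_V=1$ for all $i,j$, $\alpha_{11}=\alpha_{22}=0.1$, $\alpha_{12}=0$, $\alpha_{21}=10$. The demand model then gives
\[
M=\begin{pmatrix}0.1 & 0\\ -10 & 10.1\end{pmatrix}.
\]
This $M$ is strictly row-dominant with eigenvalues $0.1$ and $10.1$, yet $x^\top Mx=-2.375$ at $x=(1,\tfrac12)$. So for asymmetric $\alpha$, both the reduction to $\alpha_{ii}\langle u_i,u_i\rangle_V>0$ and the conclusion $M\succ 0$ are false.

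The paper's identification $M_2+M_3=D-W$ rests on the same tacit assumption. The correct repair is to impose $\alpha_{ij}=\alpha_{ji}$ (or to take $M_2+M_3:=L$ as the definition) as an explicit hypothesis, rather than treating it as bookkeeping.
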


\subsection{Convexity of the Negative Revenue}

\begin{corollary}[Convexity]
Define the cost
\[
\mathcal{C}(\theta) = \theta^\top U^\top M U\,\theta - \theta^\top U^\top U\,z.
\]
Then
\[
\nabla^2 \mathcal{C}(\theta) = 2\,U^\top M U \succeq 0,
\]
so $\mathcal{C}$ is convex in $\theta$.  
If $M\succ 0$ and $U$ has full column rank, then $\mathcal{C}$ is strictly convex.
\end{corollary}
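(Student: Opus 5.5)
The statement immediately above is the convexity corollary: for
\[
\mathcal{C}(\theta)=\theta^\top U^\top M U\theta-\theta^\top U^\top U z ,
\]
we must show $\nabla^2\mathcal{C}=2U^\top M U\succeq 0$, and strict convexity when $M\succ 0$ and $U$ has full column rank. The plan is a direct computation of the Hessian, followed by a transfer of definiteness from $M$ to $U^\top M U$ using Theorem~\ref{thm:pd}.

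\emph{Step 1 (symmetry).} The quadratic form only sees the symmetric part of $M$, so I first replace $M$ by $\tfrac12(M+M^\top)$. The Laplacian part $M_2+M_3=L$ is already symmetric, since $w_{ij}$ is the symmetrized weight and $V$ is symmetric. $M_1$ is diagonal. Hence $M$ may be taken symmetric without changing $\mathcal{C}$.

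\emph{Step 2 (Hessian).} The linear term $-\theta^\top U^\top U z$ contributes nothing to the second derivative. For symmetric $A=U^\top M U$, the standard identity $\nabla_\theta(\theta^\top A\theta)=2A\theta$ gives $\nabla^2\mathcal{C}=2U^\top M U$.

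\emph{Step 3 (positive semidefiniteness).} For any $v\in\mathbb{R}^d$, set $x=Uv\in\mathbb{R}^N$. Then
\[
v^\top U^\top M U v = x^\top M x \ge 0,
\]
because $M\succeq 0$ by part~1 of Theorem~\ref{thm:pd}. Explicitly, $M_1\succeq 0$ by the $M_1$ lemma, and $L\succeq 0$ by the quadratic-form identity $x^\top L x=\tfrac12\sum_{i,j} w_{ij}(x_i-x_j)^2\ge 0$. A twice-differentiable function with PSD Hessian everywhere on the convex domain (the box $\mathcal{C}$, or all of $\mathbb{R}^d$) is convex, which gives the first claim.

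\emph{Step 4 (strict convexity).} Take $v\neq 0$. Full column rank of $U$ gives $x=Uv\neq 0$, and $M\succ 0$ then gives $x^\top M x>0$. So $U^\top M U\succ 0$, the Hessian is constant and positive definite, and $\mathcal{C}$ is strictly (indeed strongly) convex with modulus $2\lambda_{\min}(U^\top M U)$.

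The only delicate point is the symmetry bookkeeping in Steps 1--2. The paper's $M_2$, built from $\alpha_{ij}$, may be nonsymmetric, and the Hessian formula must use the symmetrized operator. I would state explicitly that the Laplacian $L$ in Theorem~\ref{thm:pd} is defined through that symmetrization, so that $\tfrac12(M+M^\top)=M_1+L$ and the Hessian is $2U^\top\tfrac12(M+M^\top)U$. Everything else is routine.
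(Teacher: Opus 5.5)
Your argument is correct and is essentially the paper's: the paper only cites preservation of positive semidefiniteness under congruence, and your substitution $x=Uv$ is that argument written out. Your version fits better here, since Sylvester's law concerns invertible congruences while $U$ is rectangular, and your use of full column rank to get $Uv\neq 0$ supplies the strict-convexity step the paper leaves implicit.

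One caveat on your symmetry remark. Symmetrizing $M$ symmetrizes the off-diagonal weights but not the degree terms $M_3(i,i)=\sum_j\alpha_{ij}\langle u_i,u_j\rangle_V$. So the Laplacian identity $\tfrac12(M+M^\top)=M_1+L$, and with it $M\succeq 0$, actually needs $\alpha_{ij}=\alpha_{ji}$. For example, with $N=2$, $\alpha_{12}=1$, $\alpha_{21}=\alpha_{11}=\alpha_{22}=0$ and $\langle u_1,u_2\rangle_V>0$, one gets $x^\top Mx<0$ at $x=(1,2)^\top$. The paper makes the same assumption implicitly when it asserts $M_2+M_3=D-W$.
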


\begin{proof}
PSD is preserved under congruence transformations (Sylvester’s law of inertia, \cite{hornjohnson2012matrix}).
\end{proof}

\subsection{Remarks}
\begin{itemize}
    \item The Laplacian interpretation connects the elasticity model to spectral graph theory \cite{chung1997spectral}.  
    \item Convexity of the cost is the critical condition enabling online convex optimization methods such as the bandit gradient descent of \cite{flaxman2005bandit}.  
    \item This structure parallels the convex quadratic objectives assumed in low-rank bandit pricing models \cite{mueller2019low}.
\end{itemize}

\bigskip
\subsection{Regret Analysis of ADEPT (Theorem~1)}
We now provide the complete derivation of the regret bound for ADEPT under the assumptions stated in the main paper. Following the bandit convex optimization framework of Flaxman et al. \cite{flaxman2005bandit}. 

\paragraph{Theorem 1 (Regret of ADEPT).} 
Let $U \in \mathbb{R}^{N \times d}$ with $\|U\| \leq B_U$, and let the attribute box be $C \subset \mathbb{R}^d$ with radius $r_\Theta := \max_{\theta \in C} \|\theta\|_2$. Assume for all $t$: (i) $A_t = U^\top V_t U \succeq 0$ with $\|V_t\| \leq B_V$, 
(ii) $\|z_t\|_2 \leq B_z$, and (iii) bandit noise is $\sigma^2$–sub-Gaussian. 
Then ADEPT, using the two–point gradient estimator
\[
\hat g_t \;=\; \frac{d}{2\varepsilon} \big(c_t(\theta_t + \varepsilon u_t) - c_t(\theta_t - \varepsilon u_t)\big) u_t,
\quad u_t \sim \text{Unif}(\mathbb{S}^{d-1}),
\]
with step size $\eta \asymp T^{-1/2}$ and smoothing radius $\varepsilon \asymp T^{-1/4}$, achieves expected cumulative regret
\[
\mathbb{E}\!\left[\sum_{t=1}^T \big(R_t(\theta^\star) - R_t(\theta_t)\big)\right]
\;\leq\; K \sqrt{d}\,T^{3/4},
\]
where $\theta^\star \in \arg\max_{\theta \in C} \sum_{t=1}^T R_t(\theta)$ and
\[
K \;=\; c_0\Big(B_U B_z r_\Theta \;+\; B_U^2 B_V r_\Theta^2 \;+\; \sigma\Big),
\]
for a universal constant $c_0>0$ (logarithmic factors absorbed).

\paragraph{Proof sketch.} 
The gradient of the quadratic cost satisfies 
$\nabla c_t(\theta) = 2U^\top V_t U \theta - U^\top U z_t$. 
By boundedness of $z_t$, $V_t$, and $\theta$, the gradient norm is uniformly bounded, with Lipschitz constant $L = O(B_U B_z r_\Theta + B_U^2 B_V r_\Theta^2)$. Standard analysis of the two–point estimator shows that the regret decomposes into (i) initialization error $O(H^2/\eta)$, (ii) variance $O(\eta T d L^2 / \varepsilon^2)$, 
and (iii) smoothing bias $O(LT\varepsilon)$. Optimizing $\eta$ and $\varepsilon$ as in \cite{flaxman2005bandit} yields the $\tilde O(\sqrt{d}\,T^{3/4})$ bound with constants as above. This matches the optimal rate for Lipschitz bandit convex optimization.

\bigskip
\subsection{Algorithmic References}
For completeness, we include the pseudocodes of baseline algorithms compared with ADEPT:
\begin{itemize}
    \item \textbf{Algorithm~1: GDG (Gradient Descent with Bandit Feedback) \cite{flaxman2005bandit}}
    \item \textbf{Algorithm~2: Explore--Exploit Pricing \cite{henaff2019explicit}}
\end{itemize}

Each pseudocode follows the same notation as the main text, with consistent step sizes, smoothing radii, and price update rules.
\medskip
\begin{algorithm}[H]
\caption{GDG: Gradient Descent with Bandit Feedback}
\begin{algorithmic}[1]
\STATE \textbf{Input:} Learning rate $\eta > 0$, smoothing radius $\varepsilon > 0$, horizon $T$
\STATE Initialize $\theta_1 \in [\theta_{\min}, \theta_{\max}]^d$
\FOR{$t = 1, \dots, T$}
    \STATE Sample random unit vector $\xi_t \in \mathbb{R}^d$
    \STATE Construct perturbed attribute price $\tilde{\theta}_t = \theta_t + \varepsilon \xi_t$
    \STATE Set product prices $p_t = U^{\text{add}} \tilde{\theta}_t$
    \STATE Observe realized revenue $r_t = \langle q_t, p_t \rangle$ with demand $q_t$
    \STATE Estimate gradient:
    \[
        \hat{g}_t = \frac{d}{\varepsilon} \, r_t \, \xi_t
    \]
    \STATE Update attribute prices:
    \[
        \theta_{t+1} \leftarrow \Pi_{[\theta_{\min},\theta_{\max}]^d} \Big( \theta_t + \eta \hat{g}_t \Big)
    \]
\ENDFOR
\end{algorithmic}
\end{algorithm}

\begin{algorithm}[H]
\caption{Explore--Exploit}
\begin{algorithmic}[1]
\STATE \textbf{Input:} Horizon $T$, exploration length $m$, number of phases $K = T/m$
\STATE For each phase $k=1,\dots,K$:
\STATE \hspace{0.5cm} \textbf{Exploration:} For $m$ rounds, select random price vectors 
$p_t \in \mathcal{B}_P(R)$ using random $\theta_t$ in $[\theta_{\min}, \theta_{\max}]^d$, 
collect demand $q_t$ and revenues $r_t$
\STATE \hspace{0.5cm} \textbf{Model fit:} Using exploration data, solve a ridge regression problem 
to fit a quadratic surrogate revenue model 
\[
\hat{r}(\theta) \approx a + b^\top \theta - \tfrac{1}{2} \theta^\top C \theta
\]
where $C \succeq 0$
\STATE \hspace{0.5cm} \textbf{Exploitation:} Play the maximizer $\hat{\theta}^\star$ of the surrogate 
model (projected into $[\theta_{\min}, \theta_{\max}]^d$) for the remaining rounds in the phase
\STATE Continue to the next phase
\end{algorithmic}
\end{algorithm}

\end{document}